\newcounter{counter_lemma}
\newtheorem{signPreserving}[counter_lemma]{Lemma}
\newtheorem{ExplictSparsity}[counter_lemma]{Lemma}
\newtheorem{MinMaxOmega}[counter_lemma]{Lemma}
\newtheorem{convexLip}[counter_lemma]{Lemma}
\newtheorem{InducedSparsity}[counter_lemma]{Lemma}
\newcounter{counter_theorem}
\newtheorem{ReducedProblem}[counter_theorem]{Theorem}
\newtheorem{omegaSmooth}[counter_theorem]{Theorem}
\newtheorem{dualityGap}[counter_theorem]{Theorem}
\title{Fast Overlapping Group Lasso}
\author{
Jun Liu and Jieping Ye\\
Department of CSE \\
Arizona State University \\
Tempe, AZ 85287, USA}
\date{August 29, 2010}
\begin{document}

\maketitle

\begin{abstract}

The group Lasso is an extension of the Lasso for feature selection on
(predefined) non-overlapping groups of features. The non-overlapping
group structure limits its applicability in practice. There have been
several recent attempts to study a more general formulation, where
groups of features are given, potentially with overlaps between the
groups. The resulting optimization is, however, much more challenging
to solve due to the group overlaps. In this paper, we consider the
efficient optimization of the overlapping group Lasso penalized
problem. We reveal several key properties of the proximal operator
associated with the overlapping group Lasso, and compute the proximal
operator by solving the smooth and convex dual problem, which allows
the use of the gradient descent type of algorithms for the
optimization. We have performed empirical evaluations using the
breast cancer gene expression data set, which consists of 8,141 genes
organized into (overlapping) gene sets. Experimental results
demonstrate the efficiency and effectiveness of the proposed
algorithm.

\end{abstract}

\section{Introduction}

Problems with high dimensionality have become common over the recent
years. The high dimensionality poses significant challenges in
building interpretable models with high prediction accuracy.
Regularization has been commonly employed to obtain more stable and
interpretable models. A well-known example is the penalization of
the $\ell_1$ norm of the estimator, known as
Lasso~\cite{TIBSHIRANI:1996:ID24}. The $\ell_1$ norm regularization
has achieved great success in many applications. However, in some
applications~\cite{Yuan:2006}, we are interested in finding
important explanatory factors in predicting the response variable,
where each explanatory factor is represented by a group of input
features. In such cases, the selection of important features
corresponds to the selection of groups of features. As an extension
of Lasso, group Lasso~\cite{Yuan:2006} based on the combination of
the $\ell_1$ norm and the $\ell_2$ norm has been proposed for group
feature selection, and quite a few efficient
algorithms~\cite{Liu:han:2009:blockwise:cd,Liu:2009:UAI,Meier:2008}
have been proposed for efficient optimization. However, the
non-overlapping group structure in group Lasso limits its
applicability in practice. For example, in microarray gene
expression data analysis, genes may form overlapping groups as each
gene may participate in multiple pathways~\cite{Jacob:2009}.

Several recent
work~\cite{Bondell:2008,Jacob:2009,Kim:20010:icml,Zhao:2009} studies
the overlapping group Lasso, where groups of features are given,
potentially with overlaps between the groups. The resulting
optimization is, however, much more challenging to solve due to the
group overlaps. When optimizing the overlapping group Lasso problem,
one can reformulate it as a second order cone program and solve it
by the generic toolboxes, which, however, does not scale well.
In~\cite{Jenatton:2009}, an alternating algorithm called SLasso is
proposed for solving the equivalent reformulation. However, SLasso
involves an expensive matrix inversion at each alternating
iteration, and there is no known global convergence rate for such an
alternating procedure. It was recently shown
in~\cite{Jenatton:2010:icml} that, for the tree structured group
Lasso, the associated proximal operator (or equivalently, the
Moreau-Yosida reguralization)~\cite{Moreau:1965,Yosida:1964} can be
computed by applying block coordinate ascent in the dual and the
algorithm converges in one pass. It was shown independently
in~\cite{Liu:nips:2010} that the proximal operator associated with
the tree structured group Lasso has a nice analytical solution.
However, to the best of our knowledge, there is no analytical
solution to the proximal operator associated with the general
overlapping group Lasso.

In this paper, we develop an efficient algorithm for the overlapping
group Lasso penalized problem via the accelerated gradient descent
method. The accelerated gradient descent method has recently received
increasing attention in machine learning due to the fast convergence
rate even for nonsmooth convex problems. One of the key operations is
the computation of the proximal operator associated with the penalty.
We reveal several key properties of the proximal operator associated
with the overlapping group Lasso penalty, and compute the proximal
operator by solving the dual problem. The main contributions of this
paper include: (1) we develop a procedure to identify many zero
groups in the proximal operator, which dramatically reduces the size
of the dual problem to be solved; (2) we show that the dual problem
is smooth and convex with Lipschitz continuous gradient, thus can be
solved by existing smooth convex optimization tools; and (3) we
derive the duality gap between the primal and dual problems, which
can be used to check the quality of the solution and determine the
convergence of the algorithm. We have performed empirical evaluations
using the breast cancer gene expression data set, which consists of
8,141 genes organized into (overlapping) gene sets. Experimental
results demonstrate the efficiency and effectiveness of the proposed
algorithm.

\textbf{Notations: }  $\|\cdot\|$ denotes the Euclidean norm, and
$\mathbf 0$ denotes a vector of zeros. ${\rm SGN}(\cdot)$ and ${\rm
sgn} (\cdot)$ are defined in a componentwise fashion as: 1) if $t=0$,
then ${\rm SGN}(t)=[-1, 1]$ and ${\rm sgn}(t)=0$; 2) if $t>0$, then
${\rm SGN}(t)=\{1\}$ and ${\rm sgn}(t)=1$; and 3) if $t<0$, ${\rm
SGN}(t)=\{-1\}$ and ${\rm sgn}(t)=-1$. $G_i \subseteq \{1, 2, \ldots,
p\}$ denotes an index set, and $\mathbf x_{G_i}$ denote a subvector
of $\mathbf x$ consisting of the entries indexed by $G_i$.

\section{The Overlapping Group Lasso}\label{s:problem:statement}

We consider the following overlapping group Lasso penalized problem:
\begin{equation}\label{eq:problem}
    \min_{\mathbf x \in \mathbb{R}^p} f(\mathbf x)=l(\mathbf x) +
      \phi_{\lambda_2}^{\lambda_1}(\mathbf x)
\end{equation}
where $l(\cdot)$ is a smooth convex loss function, e.g., the least
squares loss,
%$l(\mathbf x)=\frac{1}{N}\sum_{i=1}^N \|\mathbf{x}^{\rm T} a_i - b_i\|^2$),
\begin{equation}\label{eq:phi}
 \phi_{\lambda_2}^{\lambda_1}(\mathbf x)=\lambda_1 \|\mathbf x\|_1 +
 \lambda_2 \sum_{i=1}^g w_i \|\mathbf x_{G_i}\|
\end{equation}
is the overlapping group Lasso penalty, $\lambda_1 \geq 0$ and
$\lambda_2 \geq 0$ are regularization parameters, $w_i > 0, i=1, 2,
\ldots, g$, $G_i \subseteq \{1, 2, \ldots, p\}$ contains the indices
corresponding to the $i$-th group of features, and $\|\cdot\|$
denotes the Euclidean norm. The $g$ groups of features are
pre-specified, and they may overlap. The penalty in \eqref{eq:phi} is
a special case of the more general Composite Absolute Penalty (CAP)
family~\cite{Zhao:2009}. When the groups are disjoint with
$\lambda_1=0$ and $\lambda_2>0$, the model in \eqref{eq:problem}
reduces to the group Lasso~\cite{Yuan:2006}. If $\lambda_1>0$ and
$\lambda_2=0$, then the model in \eqref{eq:problem} reduces to the
standard Lasso~\cite{TIBSHIRANI:1996:ID24}.

%\subsection{Accelerated Gradient Descent for Overlapping Group Lasso}
%\label{ss:tradional:nest}

In this paper, we propose to make use of the accelerated gradient
descent (AGD)~\cite{Beck:2009,NEMIROVSKI:1994:ID6,Nesterov:2004} for
solving~\eqref{eq:problem}, due to its fast convergence rate. The
algorithm is called ``FoGLasso", which stands for \textbf{F}ast
\textbf{o}verlapping \textbf{G}roup \textbf{Lasso}. One of the key
steps in the proposed FoGLasso algorithm is the computation of the
proximal operator associated with the penalty in~\eqref{eq:phi}; and
we present an efficient algorithm for the computation in the next
section.

%In this subsection, we follow~\cite{Beck:2009,Nesterov:2004} to
%briefly review AGD for optimizing \eqref{eq:problem}.

In FoGLasso, we first construct a model for approximating $f(\cdot)$
at the point $\mathbf x$ as:
\begin{equation}\label{eq:model:mL}
    f_{L, \mathbf x} (\mathbf y)  =
    [l(\mathbf x) + \langle l'(\mathbf x), \mathbf y - \mathbf x  \rangle]
    +  \phi_{\lambda_1}^{\lambda_2}(\mathbf y)+ \frac{L}{2} \|\mathbf y - \mathbf x\|^2,
\end{equation}
where $L >0$. The model $f_{L, \mathbf x}  (\mathbf y)$ consists of
the first-order Taylor expansion of the smooth function $l(\cdot)$ at
the point $\mathbf x$, the non-smooth penalty
$\phi_{\lambda_1}^{\lambda_2}(\mathbf x)$, and a regularization term
$\frac{L}{2} \|\mathbf y - \mathbf x\|^2$. Next, a sequence of
approximate solutions $\{\mathbf x_i\}$ is computed as follows:
\begin{equation}\label{eq:xkplus1}
    \mathbf x_{i+1}= \arg \min_{\mathbf y} f_{L_i, \mathbf s_i} (\mathbf y)
%=\pi_{\lambda_2/L_i}^{\lambda_1/L_i} (\mathbf s_i - \frac{1}{L_i} l'(\mathbf s_i)),
\end{equation}
where the search point $\mathbf s_i$ is an affine combination of $\mathbf
x_{i-1}$ and $\mathbf x_i$ as
\begin{equation}\label{eq:sk}
    \mathbf s_i= \mathbf x_i + \beta_i (\mathbf x_i - \mathbf x_{i-1}),
\end{equation}
for a properly chosen coefficient $\beta_i$, $L_i$ is determined by
the line search according to the Armijo-Goldstein rule so that $L_i$
should be appropriate for $\mathbf s_i$, i.e., $f( \mathbf x_{i+1})
\leq f_{L,\mathbf s_i}( \mathbf x_{i+1})$.  Following the analysis
in~\cite{Beck:2009,Nesterov:2004}, we can show that FoGLasso achieves
a convergence rate of $O(1/k^2)$ for $k$ iterations, which is optimal
among first-order methods. A key building block in FoGLasso is the
minimization of \eqref{eq:model:mL}, whose solution is known as the
proximal operator~\cite{Hiriart-Urruty:1993,Moreau:1965}. The
computation of the proximal operator is the main technical
contribution of this paper. The pseudo-code of FoGLasso is summarized
in Algorithm~\ref{algorithm:nest}, where the proximal operator
$\pi(\cdot)$ is defined in \eqref{eq:problem:m:y}. In practice, we
can terminate Algorithm~\ref{algorithm:nest} if the change of the
function values corresponding to adjacent iterations is within a
small value, say $10^{-5}$.

\begin{algorithm} [ht]
  \caption{The FoGLasso Algorithm}
  \label{algorithm:nest}
\begin{algorithmic}[1]
  \REQUIRE $ L_0 >0, \mathbf x_0, k$
  \ENSURE  $\mathbf x_{k+1}$
    \STATE Initialize $\mathbf x_1=\mathbf x_0$, $\alpha_{-1}=0$, $\alpha_0=1$, and $L=L_0$.
    \FOR{$i=1$ to $k$}
     \STATE Set $\beta_i= \frac{\alpha_{i-2}-1}{\alpha_{i-1}}$, $\mathbf s_i =
     \mathbf x_i + \beta_i (\mathbf x_i  - \mathbf x_{i-1})$
     \STATE Find the smallest $L=2^j L_{i-1}, j=0, 1, \ldots$ such that
%\begin{equation}\label{eq:appropriate}
 $   f( \mathbf x_{i+1}) \leq f_{L,\mathbf s_i}( \mathbf
x_{i+1})$ holds,
%\end{equation}
where $\mathbf x_{i+1}=\pi_{\lambda_2/L}^{\lambda_1/L} (\mathbf s_i -
\frac{1}{L} l'(\mathbf s_i))$
    \STATE Set $L_i=L$ and $\alpha_{i+1}=\frac{1+\sqrt{1+4  \alpha_i^2}}{2}$
    \ENDFOR
\end{algorithmic}
\end{algorithm}

%We present AGD in Algorithm~\ref{algorithm:nest}. It has been shown
%in~\cite{Beck:2009,Nesterov:2004} that AGD can achieve the optimal
%convergence rate of $O(1/k^2)$ for $k$ iterations. A key building
%block in the optimization is the minimization of \eqref{eq:model:mL},
%whose minimum is known as the proximal operator.

\section{The Associated Proximal Operator and Its Efficient
Computation}\label{s:proximal:operator}

The proximal operator associated with the overlapping group Lasso
penalty is defined as follows:
\begin{equation}\label{eq:problem:m:y}
    \pi_{\lambda_2}^{\lambda_1}(\mathbf v)= \arg \min_{\mathbf x \in \mathbb{R}^p}
    \left\{ g_{\lambda_2}^{\lambda_1}(\mathbf x) \equiv \frac{1}{2}\|\mathbf x - \mathbf v\|^2
    + \lambda_1 \|\mathbf x\|_1 +  \lambda_2 \sum_{i=1}^g w_i \|\mathbf
    x_{G_i}\|\right\},
\end{equation}
which is a special case of~\eqref{eq:problem} by setting $l(\mathbf
x)=\frac{1}{2}\|\mathbf x - \mathbf v\|^2$. It can be verified that
the approximate solution $x_{i+1}$ in \eqref{eq:xkplus1} is given by
$\mathbf x_{i+1}=\pi_{\lambda_2/L_i}^{\lambda_1/L_i} (\mathbf s_i -
\frac{1}{L_i} l'(\mathbf s_i))$. Recently, it has been shown
in~\cite{Jenatton:2010:icml,Liu:nips:2010,Liu:kdd:2010} that, the
efficient computation of the proximal operator is key to many sparse
learning algorithms~\cite[Section 2]{Liu:kdd:2010}. Next, we focus
on the efficient computation of $\pi_{\lambda_2}^{\lambda_1}(\mathbf
v)$ in \eqref{eq:problem:m:y} for a given $\mathbf v$.

\subsection{Key Properties of the Proximal Operator}\label{ss:proximal:operator}

We first reveal several basic properties of the proximal operator
$\pi_{\lambda_2}^{\lambda_1}(\mathbf v)$.

\begin{signPreserving}\label{lemma:signPreserving}
Suppose that $\lambda_1, \lambda_2 \geq 0$, and $w_i > 0$, for $i=1,
2, \ldots, g$. Let $\mathbf x^*=\pi_{\lambda_2}^{\lambda_1}(\mathbf
v)$. The following holds: 1) if $v_i >0$, then $0 \leq x_i^* \leq
v_i$; 2) if $v_i <0$, then $v_i \leq x_i^* \leq 0$; 3) if $v_i =0$,
then $x_i^* = 0$; 4) ${\rm SGN}(\mathbf v) \subseteq {\rm
SGN}(\mathbf x^*)$; and 5) $\pi_{\lambda_2}^{\lambda_1}(\mathbf
v)={\rm sgn}(\mathbf v) \odot \pi_{\lambda_2}^{\lambda_1}(|\mathbf
v|)$.
\end{signPreserving}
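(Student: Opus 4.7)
The plan is to prove claims 1--3 by one-component perturbation arguments, derive claim 4 immediately from them, and obtain claim 5 from strict convexity via a change of variable. The underlying mechanism is that $\|x\|_1 = \sum_j |x_j|$ and every group norm $\|x_{G_j}\|$ depend on each coordinate only through its magnitude, so modifying $x_i^*$ to $\tilde{x}_i$ with $|\tilde{x}_i| \le |x_i^*|$ (keeping the other coordinates fixed) can only decrease the regularization terms. This reduces each claim to comparing the two squared residuals $(x_i^* - v_i)^2$ and $(\tilde{x}_i - v_i)^2$.

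Concretely, for claim 1 suppose $v_i > 0$. If $x_i^* < 0$, I would replace $x_i^*$ by $\tilde{x}_i = -x_i^* > 0$; the regularizers are unchanged (same magnitude), while $(\tilde{x}_i - v_i)^2 = (|x_i^*| - v_i)^2 < (|x_i^*| + v_i)^2 = (x_i^* - v_i)^2$, contradicting optimality. If $x_i^* > v_i$, I would replace $x_i^*$ by $\tilde{x}_i = v_i$; the squared residual drops to $0$, and since $|\tilde{x}_i| < |x_i^*|$ the $\ell_1$ and group norms do not increase. Claim 2 follows by the same argument with signs reversed, and claim 3 follows by taking $\tilde{x}_i = 0$ whenever $v_i = 0$ and $x_i^* \ne 0$. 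Claim 4 is then just a re-reading of claims 1--3 through the definition of $\mathrm{SGN}(\cdot)$: when $v_i > 0$, $\mathrm{SGN}(v_i) = \{1\}$ and $x_i^* \ge 0$ puts $1$ into $\mathrm{SGN}(x_i^*)$; symmetrically for $v_i < 0$; and when $v_i = 0$, claim 3 gives $\mathrm{SGN}(x_i^*) = [-1,1] \supseteq \mathrm{SGN}(v_i)$.

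For claim 5, I would use claims 1--3 to write $x_i^* = \mathrm{sgn}(v_i)\,|x_i^*|$ in every case (in the case $v_i = 0$ both sides are $0$), then set $y_i := |x_i^*|$. Because each $\|\cdot\|_{G_j}$ and the $\ell_1$ norm are invariant under coordinate-wise sign changes, and because $|x_i^*|$ and $|v_i|$ have the same sign relationship as $x_i^*$ and $v_i$, one checks that $g_{\lambda_2}^{\lambda_1}(y;|v|) = g_{\lambda_2}^{\lambda_1}(x^*;v)$. Since $g_{\lambda_2}^{\lambda_1}(\cdot;|v|)$ is strictly convex (quadratic data term plus convex regularizer), its minimizer is unique, and so any minimizer of $g_{\lambda_2}^{\lambda_1}(\cdot;v)$ gets mapped to $\pi_{\lambda_2}^{\lambda_1}(|v|)$ by $x \mapsto |x|$, giving $\pi_{\lambda_2}^{\lambda_1}(v) = \mathrm{sgn}(v) \odot \pi_{\lambda_2}^{\lambda_1}(|v|)$.

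The calculations themselves are elementary; the only subtlety I anticipate is the degenerate case $v_i = 0$ in claim 5, where $\mathrm{sgn}(v_i) = 0$ and the identity $x_i^* = \mathrm{sgn}(v_i)|x_i^*|$ could look vacuous. That case is handled cleanly by invoking claim 3 first to force $x_i^* = 0$, after which both sides of the identity agree trivially. No deeper tool than single-coordinate perturbation plus strict convexity is needed.
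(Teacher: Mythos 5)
Your proposal is correct and follows essentially the same route as the paper: claims 1--3 by single-coordinate perturbations that strictly decrease the objective (the paper truncates to $0$ where you reflect to $-x_i^*$, but both work), with claims 4 and 5 then deduced from 1--3 and the strict convexity/uniqueness of the minimizer. The paper gives no detail for claims 4 and 5, so your fuller treatment of the sign-flip correspondence in claim 5 simply fills in what the paper leaves implicit.
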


\begin{proof}
When $\lambda_1, \lambda_2 \geq 0$, and $w_i \geq 0$, for $i=1, 2,
\ldots, g$, the objective function $g_{\lambda_2}^{\lambda_1}(\cdot)$
is strictly convex, thus $\mathbf x^*$ is the unique minimizer. We
first show if $v_i >0$, then $0 \le x_i^* \le v_i$.  If $x_i^* >
v_i$, then we can construct a $\mathbf {\hat x}$ as follows: $\hat
x_j=x_j^*, j\neq i$ and $\hat x_i= v_i$. Similarly, if $x_i^* <0$,
then we can construct a $\mathbf {\hat x}$ as follows: $\hat
x_j=x_j^*, j\neq i$ and $\hat x_i= 0$. It is easy to verify that
$\mathbf {\hat x}$ achieves a lower objective function value than
$\mathbf x^*$ in both cases. We can prove the second and the third
properties using similar arguments. Finally, we can prove the fourth
and the fifth properties using the definition of $\mbox{SGN}(\cdot)$
and the first three properties.
\end{proof}

Next, we show that $\pi_{\lambda_2}^{\lambda_1}(\cdot)$ can be
directly derived from $\pi_{\lambda_2}^{0}(\cdot)$ by
soft-thresholding. Thus, we only need to focus on the simple case
when $\lambda_1=0$. This simplifies the optimization in
\eqref{eq:problem:m:y}.

\begin{ReducedProblem}\label{theorem:reducedProblem}
Let
\begin{equation}\label{eq:u:v}
    \mathbf u= {\rm sgn}(\mathbf v) \odot \max(|\mathbf v| -
\lambda_1, 0),
\end{equation}
\begin{equation}\label{eq:h:x}
    \pi_{\lambda_2}^{0}(\mathbf u) = \arg \min_{\mathbf x \in \mathbb{R}^p}
    \left \{h_{\lambda_2}(\mathbf x) \equiv \frac{1}{2}\|\mathbf x - \mathbf u\|^2
    +  \lambda_2 \sum_{i=1}^g w_i \|\mathbf   x_{G_i}\| \right \}.
\end{equation}
The following holds:
\begin{equation}\label{eq:problem:m:y:reduced:relationship}
    \pi_{\lambda_2}^{\lambda_1}(\mathbf v)=\pi_{\lambda_2}^{0}(\mathbf u).
\end{equation}
\end{ReducedProblem}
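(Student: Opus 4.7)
The plan is to verify the claim via subdifferential optimality conditions. Since both $g_{\lambda_2}^{\lambda_1}(\cdot)$ and $h_{\lambda_2}(\cdot)$ are strictly convex, each has a unique minimizer characterized by the inclusion $\mathbf 0 \in \partial(\cdot)$. I would let $\mathbf x^* = \pi_{\lambda_2}^0(\mathbf u)$ and show that $\mathbf x^*$ also satisfies the optimality conditions of the full problem \eqref{eq:problem:m:y}; uniqueness then forces $\pi_{\lambda_2}^{\lambda_1}(\mathbf v) = \mathbf x^*$.

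Writing out the two optimality conditions side by side, the condition for $\mathbf x^*$ being the minimizer of $h_{\lambda_2}$ is that there exists $\mathbf z \in \lambda_2 \, \partial\!\left(\sum_i w_i \|\mathbf x^*_{G_i}\|\right)$ with $\mathbf x^* - \mathbf u + \mathbf z = \mathbf 0$, while the condition for $\mathbf x^*$ being the minimizer of $g_{\lambda_2}^{\lambda_1}$ is that there also exists $\mathbf s \in \mathrm{SGN}(\mathbf x^*)$ with $\mathbf x^* - \mathbf v + \lambda_1 \mathbf s + \mathbf z = \mathbf 0$. Subtracting the first relation from the second, I reduce the task to establishing the componentwise inclusion
\begin{equation*}
    \mathbf v - \mathbf u \in \lambda_1 \, \mathrm{SGN}(\mathbf x^*).
\end{equation*}

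The verification of this inclusion is the step that actually uses the structure of the soft-thresholding formula \eqref{eq:u:v}, together with Lemma~\ref{lemma:signPreserving}. I would first show the stronger statement $\mathbf v - \mathbf u \in \lambda_1 \, \mathrm{SGN}(\mathbf u)$ by splitting into cases on each coordinate: when $|v_i| > \lambda_1$, the definition of $u_i$ gives $v_i - u_i = \lambda_1 \mathrm{sgn}(v_i) = \lambda_1 \mathrm{sgn}(u_i)$; when $|v_i| \le \lambda_1$, we have $u_i = 0$ and $v_i - u_i = v_i \in [-\lambda_1,\lambda_1] = \lambda_1 \, \mathrm{SGN}(0)$. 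Then I invoke property (4) of Lemma~\ref{lemma:signPreserving} applied to $\mathbf x^* = \pi_{\lambda_2}^0(\mathbf u)$, which yields $\mathrm{SGN}(\mathbf u) \subseteq \mathrm{SGN}(\mathbf x^*)$, giving the desired inclusion.

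The main obstacle is really only a matter of bookkeeping: making sure that the same subgradient $\mathbf z$ of the group term works for both problems (which is automatic once one picks the one delivered by the $h_{\lambda_2}$ optimality condition) and that the sign inclusion for $\mathbf u$ transfers to $\mathbf x^*$, for which Lemma~\ref{lemma:signPreserving} is precisely tailored. No delicate analysis of the overlap structure of the groups is needed, because the group-penalty subgradient is reused verbatim between the two optimality systems.
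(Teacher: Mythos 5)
Your proposal is correct and follows essentially the same route as the paper's proof: characterize $\mathbf x^* = \pi_{\lambda_2}^{0}(\mathbf u)$ by its subdifferential optimality condition, reduce the claim to the inclusion $\mathbf v - \mathbf u \in \lambda_1\,\mathrm{SGN}(\mathbf x^*)$, establish $\mathbf v - \mathbf u \in \lambda_1\,\mathrm{SGN}(\mathbf u)$ from the soft-thresholding formula, and transfer it via property (4) of Lemma~\ref{lemma:signPreserving}. The only difference is that you spell out the componentwise case analysis for the soft-thresholding inclusion, which the paper states without detail.
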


\begin{proof}
Denote the unique minimizer of $h_{\lambda_2}(\cdot)$ as $\mathbf
x^*$. The sufficient and necessary condition for the optimality of
$\mathbf x^*$ is:
\begin{equation}\label{eq:h:subdifferential}
   \mathbf 0 \in \partial h_{\lambda_2}(\mathbf x^*) = \mathbf x^* - \mathbf u +
    \partial \phi_{\lambda_2}^{0} (\mathbf x^*),
\end{equation}
where $\partial h_{\lambda_2}(\mathbf x)$ and $\partial
\phi_{\lambda_2}^{0} (\mathbf x)$ are the subdifferential sets of
$h_{\lambda_2}(\cdot)$ and $\phi_{\lambda_2}^{0}(\cdot)$ at $\mathbf
x$, respectively.

To prove~\eqref{eq:problem:m:y:reduced:relationship}, it suffices to
show $\mathbf 0 \in \partial g_{\lambda_2}^{\lambda_1}(\mathbf x^*)$.
The subdifferential of $g_{\lambda_2}^{\lambda_1}(\cdot)$ at $\mathbf
x^*$ is given by
\begin{equation}\label{eq:g:subdifferential}
   \partial g_{\lambda_2}^{\lambda_1}(\mathbf x^*)
   = \mathbf x^* - \mathbf v + \partial \phi_{\lambda_2}^{\lambda_1} (\mathbf x^*)
   =\mathbf x^* - \mathbf v + \lambda_1 {\rm SGN}(\mathbf x^*) +
   \partial \phi_{\lambda_2}^{0} (\mathbf x^*).
\end{equation}
It follows from~\eqref{eq:u:v} that
%\begin{equation}\label{eq:u:v:SGN}
$   \mathbf u \in \mathbf v - \lambda_1 \mbox{SGN}(\mathbf u)$.
%\end{equation}
Using the fourth property in Lemma~\ref{lemma:signPreserving}, we
have
%\begin{equation}\label{eq:u:x:SGN}
$\mbox{SGN}(\mathbf u) \subseteq {\rm SGN}(\mathbf x^*)$.
%\end{equation}
%Incorporating \eqref{eq:u:v:SGN} and \eqref{eq:u:x:SGN}, we have
Thus,
\begin{equation}\label{eq:u:v:x:SGN}
   \mathbf u \in \mathbf v - \lambda_1 {\rm SGN}(\mathbf x^*).
\end{equation}
It follows from \eqref{eq:h:subdifferential}-\eqref{eq:u:v:x:SGN}
that $\mathbf 0 \in \partial g_{\lambda_2}^{\lambda_1}(\mathbf x^*)$.
\end{proof}

It follows from Theorem~\ref{theorem:reducedProblem} that, we only
need to focus on the optimization of \eqref{eq:h:x} in the following
discussion. The difficulty in the optimization of \eqref{eq:h:x}
lies in the large number of groups that may overlap. In practice,
many groups will be zero, thus achieving a sparse
solution\footnote{The sparse solution is much more desirable than
the non-sparse one in many applications. For the non-sparse case,
one may apply the subgradient based methods such as those proposed
in~\cite{Shalev-Shwartz:2007:icml,Xiao:average:2009} for
solving~\eqref{eq:h:x}, which deserves further study.}. However, the
zero groups are not known in advance. The key question we aim to
address is how we can identify as many zero groups as possible to
reduce the complexity of the optimization. We present in the next
lemma a sufficient condition for a group to be zero.

\begin{ExplictSparsity}\label{lemma:ExplictSparsity}
Denote the minimizer of $h_{\lambda_2}(\cdot)$ in \eqref{eq:h:x} by
$\mathbf x^*$. If the $i$-th group satisfies $\|\mathbf u_{G_i}\|
\leq \lambda_2 w_i$, then $\mathbf x_{G_i}^*=\mathbf 0$, i.e., the
$i$-th group is zero.
\end{ExplictSparsity}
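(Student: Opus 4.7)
The plan is to argue by contradiction: assume $\mathbf{x}^*_{G_i} \neq \mathbf{0}$ and construct a competitor $\hat{\mathbf{x}}$ that strictly improves on $\mathbf{x}^*$, contradicting uniqueness of the minimizer of the strictly convex $h_{\lambda_2}(\cdot)$.

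The natural candidate is
\begin{equation*}
\hat{x}_j = \begin{cases} 0 & j \in G_i \\ x^*_j & j \notin G_i, \end{cases}
\end{equation*}
i.e., zero out exactly the $G_i$-block of $\mathbf{x}^*$. I would then compute $h_{\lambda_2}(\mathbf{x}^*) - h_{\lambda_2}(\hat{\mathbf{x}})$ directly. The quadratic term contributes $\tfrac{1}{2}\|\mathbf{x}^*_{G_i}-\mathbf{u}_{G_i}\|^2 - \tfrac{1}{2}\|\mathbf{u}_{G_i}\|^2 = \tfrac{1}{2}\|\mathbf{x}^*_{G_i}\|^2 - \langle \mathbf{x}^*_{G_i}, \mathbf{u}_{G_i}\rangle$, which after applying Cauchy--Schwarz and the hypothesis $\|\mathbf{u}_{G_i}\| \leq \lambda_2 w_i$ is bounded below by $\tfrac{1}{2}\|\mathbf{x}^*_{G_i}\|^2 - \lambda_2 w_i \|\mathbf{x}^*_{G_i}\|$.

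The main obstacle is that the groups overlap, so setting $\hat{\mathbf{x}}_{G_i} = \mathbf{0}$ also modifies the $G_j$-blocks for every $j$ with $G_j \cap G_i \neq \emptyset$. The key observation that rescues the argument is monotonicity: for every group $G_j$, since $\hat{\mathbf{x}}_{G_j}$ is obtained from $\mathbf{x}^*_{G_j}$ by zeroing some (possibly none) of its coordinates, we have $\|\hat{\mathbf{x}}_{G_j}\| \leq \|\mathbf{x}^*_{G_j}\|$. Therefore each term $\lambda_2 w_j (\|\mathbf{x}^*_{G_j}\| - \|\hat{\mathbf{x}}_{G_j}\|)$ is nonnegative, and in particular the $j=i$ term alone contributes exactly $\lambda_2 w_i \|\mathbf{x}^*_{G_i}\|$. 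Dropping the remaining nonnegative terms gives
\begin{equation*}
h_{\lambda_2}(\mathbf{x}^*) - h_{\lambda_2}(\hat{\mathbf{x}}) \geq \tfrac{1}{2}\|\mathbf{x}^*_{G_i}\|^2 + \|\mathbf{x}^*_{G_i}\|\bigl(\lambda_2 w_i - \|\mathbf{u}_{G_i}\|\bigr) \geq \tfrac{1}{2}\|\mathbf{x}^*_{G_i}\|^2 > 0,
\end{equation*}
which contradicts optimality of $\mathbf{x}^*$. Hence $\mathbf{x}^*_{G_i}=\mathbf{0}$.

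An alternative route is via the subdifferential optimality condition $\mathbf{0} \in \mathbf{x}^* - \mathbf{u} + \lambda_2 \sum_j w_j \partial \|\mathbf{x}^*_{G_j}\|$, choosing the subgradient for $G_i$ to be $-\mathbf{u}_{G_i}/(\lambda_2 w_i)$ (which lies in the unit ball of $\partial\|\cdot\|$ at $\mathbf{0}$ by hypothesis) and zero elsewhere, then checking consistency. However, the primal perturbation argument above is cleaner because it cleanly sidesteps the complication that overlapping groups couple the subgradient conditions across coordinates.
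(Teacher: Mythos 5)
Your proof is correct and is essentially the paper's argument made fully explicit: the paper also zeroes out the $G_i$-block (fixing $\mathbf x_{\overline{G}_i}=\mathbf x^*_{\overline{G}_i}$), relies on the same monotonicity of the overlapping group norms under zeroing coordinates, and leaves the single-group step ``$\|\mathbf u_{G_i}\|\leq\lambda_2 w_i$ implies $\mathbf 0$ minimizes the local term'' as a verification, which your Cauchy--Schwarz computation carries out. No substantive difference in route.
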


\begin{proof}
We decompose $h_{\lambda_2}(\mathbf x)$ into two parts as follows:
\begin{equation}\label{eq:h:x:rewrite}
    h_{\lambda_2}(\mathbf x) = \left\{ \frac{1}{2}\|\mathbf x_{G_i} -
    \mathbf u_{G_i}\|^2
    +  \lambda_2 w_i \|\mathbf   x_{G_i}\| \right\} +
    \left\{ \frac{1}{2}\|\mathbf x_{\overline{G}_i} -
    \mathbf u_{\overline{G}_i}\|^2 +
    \lambda_2 \sum_{j \neq i} w_j \|\mathbf x_{G_j}\| \right\},
\end{equation}
where $\overline{G}_i=\{1,2, \ldots, p\} - G_i$ is the complementary
set of $G_i$. We consider the minimization of $h_{\lambda_2}(\mathbf
x)$ in terms of $\mathbf x_{G_i}$ when $\mathbf x_{\overline{G}_i} =\mathbf
x^*_{\overline{G}_i}$ is fixed. It can be verified that if $\|\mathbf
u_{G_i}\| \leq \lambda_2 w_i$, then $\mathbf x_{G_i}^*=\mathbf 0$
minimizes both terms in~\eqref{eq:h:x:rewrite} simultaneously. Thus
we have $\mathbf x_{G_i}^*=\mathbf 0$.
\end{proof}

Lemma~\ref{lemma:ExplictSparsity} may not identify many true zero
groups due to the strong condition imposed. The lemma below weakens
the condition in Lemma~\ref{lemma:ExplictSparsity}. Intuitively, for
a group $G_i$, we first identify all existing zero groups that
overlap with $G_i$, and then compute the overlapping index subset
$S_i$ of $G_i$ as:
\begin{equation}\label{eq:set:S:i}
S_i=\bigcup_{j \neq i, \mathbf x^*_{G_j}=\mathbf 0} (G_j \cap G_i).
\end{equation}
We can show that $\mathbf x^*_{G_i}=\mathbf 0$ if $\|\mathbf u_{G_i -
S_i}\| \leq \lambda_2 w_i$ is satisfied. Note that this condition is
much weaker than the condition in Lemma~\ref{lemma:ExplictSparsity},
which requires that $\|\mathbf u_{G_i}\| \leq \lambda_2 w_i$.

\begin{InducedSparsity}\label{corollary:InducedSparsity}
Denote the minimizer of $h_{\lambda_2}(\cdot)$ by $\mathbf x^*$. Let
$S_i$, a subset of $G_i$, be defined in \eqref{eq:set:S:i}. If
$\|\mathbf u_{G_i - S_i}\| \leq \lambda_2 w_i$ holds, then $\mathbf
x^*_{G_i}=\mathbf 0$.
\end{InducedSparsity}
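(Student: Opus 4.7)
The plan is to mimic the perturbation idea used in the proof of Lemma~\ref{lemma:ExplictSparsity}, but exploit the prior knowledge that the already-identified zero groups force $\mathbf{x}^*_{S_i} = \mathbf{0}$ for free. Concretely, every index $k \in S_i$ belongs to some $G_j$ with $j\neq i$ and $\mathbf{x}^*_{G_j} = \mathbf{0}$, so $x^*_k = 0$; in particular $\|\mathbf{x}^*_{G_i}\| = \|\mathbf{x}^*_{G_i - S_i}\|$. This reduces the problem to showing that the "free" coordinates $G_i - S_i$ are also driven to zero.

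To do so, I would construct a competitor $\mathbf{y}$ by zeroing out $\mathbf{x}^*$ on $G_i - S_i$ and leaving it unchanged elsewhere. By the observation above, $\mathbf{y}_{G_i} = \mathbf{0}$ entirely. Then I would compare $h_{\lambda_2}(\mathbf{y})$ and $h_{\lambda_2}(\mathbf{x}^*)$ term by term. The quadratic piece changes by $\langle \mathbf{x}^*_{G_i-S_i},\mathbf{u}_{G_i-S_i}\rangle - \tfrac{1}{2}\|\mathbf{x}^*_{G_i-S_i}\|^2$, the $i$-th penalty term changes by $-\lambda_2 w_i \|\mathbf{x}^*_{G_i-S_i}\|$, and every other penalty term $\|\mathbf{y}_{G_j}\|$ is no larger than $\|\mathbf{x}^*_{G_j}\|$ because $\mathbf{y}_{G_j}$ is obtained from $\mathbf{x}^*_{G_j}$ by zeroing a subset of its entries. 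Applying Cauchy--Schwarz to the inner product and then using the assumption $\|\mathbf{u}_{G_i-S_i}\|\le \lambda_2 w_i$ collapses the whole difference to
\begin{equation*}
h_{\lambda_2}(\mathbf{y}) - h_{\lambda_2}(\mathbf{x}^*) \le -\tfrac{1}{2}\|\mathbf{x}^*_{G_i-S_i}\|^2 + \|\mathbf{x}^*_{G_i-S_i}\|\bigl(\|\mathbf{u}_{G_i-S_i}\| - \lambda_2 w_i\bigr) \le 0,
\end{equation*}
with strict inequality unless $\mathbf{x}^*_{G_i-S_i}=\mathbf{0}$. Since $h_{\lambda_2}$ is strictly convex with unique minimizer $\mathbf{x}^*$, we must have $\mathbf{x}^*_{G_i-S_i}=\mathbf{0}$, and combining with $\mathbf{x}^*_{S_i}=\mathbf{0}$ yields $\mathbf{x}^*_{G_i}=\mathbf{0}$.

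The step requiring the most care is the bookkeeping of how the other group penalties $\|\mathbf{x}_{G_j}\|$ respond to the perturbation. For zero groups $G_j$ that contribute to $S_i$, the definition of $S_i$ guarantees $G_j\cap G_i\subseteq S_i$, so $G_j$ is disjoint from $G_i - S_i$ and its penalty term is literally unchanged. For the remaining groups, $\mathbf{y}_{G_j}$ differs from $\mathbf{x}^*_{G_j}$ only by having some additional entries replaced by zero, so its Euclidean norm cannot increase. This monotonicity argument is what lets the sum over $j\neq i$ drop harmlessly out of the inequality, and it is the one place where it is important to separate "already-zero" groups from the others rather than treat them uniformly as in Lemma~\ref{lemma:ExplictSparsity}.
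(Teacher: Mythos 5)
Your proof is correct and follows the same route the paper intends: the paper omits the proof, stating only that it ``follows similar arguments in Lemma~\ref{lemma:ExplictSparsity},'' namely comparing the objective at $\mathbf x^*$ against the vector obtained by zeroing out the group, with the key new ingredient being exactly the one you isolate --- that $\mathbf x^*_{S_i}=\mathbf 0$ is already forced by the previously identified zero groups, so only the condition $\|\mathbf u_{G_i-S_i}\|\le\lambda_2 w_i$ on the remaining coordinates is needed. Your version merely writes out explicitly, via Cauchy--Schwarz and the monotonicity of $\|\cdot\|$ under zeroing entries, the step the paper's Lemma~\ref{lemma:ExplictSparsity} proof leaves as ``it can be verified.''
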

The proof of Lemma~\ref{corollary:InducedSparsity} follows similar
arguments in Lemma~\ref{lemma:ExplictSparsity} and is omitted.
Lemma~\ref{corollary:InducedSparsity} naturally leads to an iterative
procedure for identifying the zero groups: For each group $G_i$, if
$\|\mathbf u_{G_i}\| \leq \lambda_2 w_i$, then we set $\mathbf
u_{G_i}=\mathbf 0$; we cycle through all groups repeatedly until
$\mathbf u$ does not change.

%Such a procedure is presented in
%Algorithm~\ref{algorithm:sparse:group} below.
%
%\begin{algorithm} [ht]
%  \caption{Identifying Some Sparse Groups}
%  \label{algorithm:sparse:group}
%\begin{algorithmic}[1]
%  \REQUIRE $\mathbf u, \lambda_2 \geq 0, \mathbf w \geq \mathbf 0$
%  \ENSURE  Sparse groups in $\mathbf s$
%
%  \STATE Set $\mathbf s=\mathbf u$
%
%  \REPEAT
%
%  \FOR{$i=1$ to $g$}
%     \STATE If $\|\mathbf s_{G_i}\| \leq \lambda_2 w_i$, then set $\mathbf
%     s_{G_i}=\mathbf 0$
%  \ENDFOR
%
%  \UNTIL{$\mathbf s$ does change}
%
%\end{algorithmic}
%\end{algorithm}

Let $p'=|\{u_i: u_i \neq 0\}|$ be the number of nonzero elements in
$\mathbf u$, $g'=|\{\mathbf u_{G_i}: \mathbf u_{G_i} \neq \mathbf
0\}|$ be the number of the nonzero groups, and $\mathbf x^*$ denote
the minimizer of $h_{\lambda_2}(\cdot)$. It follows from
Lemma~\ref{corollary:InducedSparsity} and
Lemma~\ref{lemma:signPreserving} that, if $u_i =0$, then $x^*_i=0$.
Therefore, by applying the above iterative procedure, we can find the
minimizer of \eqref{eq:h:x} by solving a reduced problem that has $p'
\leq p$ variables and $g' \leq g$ groups. With some abuse of
notation, we still use \eqref{eq:h:x} to denote the resulting reduced
problem. In addition, from Lemma~\ref{lemma:signPreserving}, we only
focus on $\mathbf u>0$ in the following discussion, and the analysis
can be easily generalized to the general $\mathbf u$.

\subsection{Reformulation as an Equivalent Smooth Convex Optimization Problem}

It follows from the first property of
Lemma~\ref{lemma:signPreserving} that, we can rewrite \eqref{eq:h:x}
as:
\begin{equation}\label{eq:h:x:positive}
    \pi_{\lambda_2}^{0}(\mathbf u) = \arg \min_{\mathbf x \in \mathbb{R}^p: \mathbf x \geq \mathbf 0}
    \left \{h_{\lambda_2}(\mathbf x) \equiv \frac{1}{2}\|\mathbf x - \mathbf u\|^2
    +  \lambda_2 \sum_{i=1}^g w_i \|\mathbf   x_{G_i}\| \right \},
\end{equation}
where the minimizer of $h_{\lambda_2}(\cdot)$ is constrained to be
non-negative due to $\mathbf u>0$.

Making use of the dual norm of the Euclidean norm $\|\cdot\|$, we
can rewrite $h_{\lambda_2}(\mathbf x)$ as:
\begin{equation}\label{eq:h:x:max}
    h_{\lambda_2}(\mathbf x)= \max_{Y \in \Omega}
     \frac{1}{2}\|\mathbf x - \mathbf u\|^2 +
    \sum_{i=1}^g
    \langle \mathbf x, Y^i  \rangle,
\end{equation}
where $\Omega$ is defined as follows:
$$\Omega=\{Y \in \mathbb{R}^{p \times g}: Y^i_{\overline{G}_i}
=\mathbf 0, \|Y^i\|\leq \lambda_2 w_i, i=1, 2, \ldots, g\},$$
$\overline{G}_i$ is the complementary set of $G_i$, $Y$ is a sparse
matrix satisfying $Y_{ij}=0$ if the $i$-th feature does not belong to
the $j$-th group, i.e., $i \not \in G_{j}$, and $Y^i$ denotes the
$i$-th column of $Y$. As a result, we can reformulate
\eqref{eq:h:x:positive} as the following min-max problem:
\begin{equation}\label{eq:min:max}
    \min_{\mathbf x \in \mathbb{R}^p: \mathbf x \geq \mathbf 0} \mbox{ } \max_{ Y \in \Omega}
     \left \{ \psi(\mathbf x, Y)=\frac{1}{2}\|\mathbf x - \mathbf u\|^2 +
     \langle \mathbf x, Y \mathbf e \rangle \right \},
\end{equation}
where $\mathbf e \in \mathbb{R}^g$ is a vector of ones. It is easy to
verify that $\psi(\mathbf x, Y)$ is convex in $\mathbf x$ and concave
in $Y$, and the constraint sets are closed convex for both $\mathbf
x$ and $Y$. Thus, \eqref{eq:min:max} has a saddle point, and the
min-max can be exchanged.

It is easy to verify that for a given $Y$,  the optimal $ \mathbf x$
minimizing $\psi(\mathbf x, Y)$ in \eqref{eq:min:max} is given by
\begin{equation}\label{eq:x:u:Y:w}
   \mathbf x = \max( \mathbf u - Y \mathbf e, \mathbf 0).
\end{equation}
Plugging \eqref{eq:x:u:Y:w} into \eqref{eq:min:max}, we obtain the
following minimization problem with regard to $Y$:
\begin{equation}\label{eq:omega:Y}
    \min_{Y \in \mathbb{R}^{p \times g}: Y \in \Omega}
     \left \{ \omega( Y)=-  \psi(\max( \mathbf u - Y \mathbf e, \mathbf 0), Y)  \right \}.
\end{equation}

Our methodology for minimizing $h_{\lambda_2}(\cdot)$ defined in
\eqref{eq:h:x} is to first solve \eqref{eq:omega:Y}, and then
construct the solution to $h_{\lambda_2}(\cdot)$ via
\eqref{eq:x:u:Y:w}. We show in Theorem~\ref{theorem:omegaSmooth}
below that the function $\omega(\cdot)$ is continuously
differentiable with Lipschitz continuous gradient. Therefore, we
convert the non-smooth problem \eqref{eq:h:x:positive} to the smooth
problem \eqref{eq:omega:Y}, making the smooth convex optimization
tools applicable.

\begin{omegaSmooth}\label{theorem:omegaSmooth}
The function $\omega( Y)$ is convex and continuously differentiable
with
\begin{equation}\label{eq:omega:gradient}
  \omega'(Y)=-\max( \mathbf u - Y \mathbf e, \mathbf 0) \mathbf
  e^{\rm T}.
\end{equation}
In addition, $\omega'(Y)$ is Lipschitz continuous with constant
$g^2$, i.e.,
\begin{equation}\label{eq:omega:Lip}
  \|\omega'(Y_1) - \omega'(Y_2) \|_F \leq g^2 \|Y_1 -Y_2\|_F,
 \,\,\,\,  \forall \,\, Y_1, Y_2 \in \mathbb{R}^{p \times g}.
\end{equation}
\end{omegaSmooth}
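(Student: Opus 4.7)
The plan is to obtain all three claims from the fact that $\omega$ is (the negative of) a partial minimization of a jointly convex function that is affine in $Y$ and strongly convex in $\mathbf{x}$, so Danskin's theorem applies cleanly.

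First I would establish convexity. Observe that $\psi(\mathbf{x},Y) = \tfrac{1}{2}\|\mathbf{x}-\mathbf{u}\|^2 + \langle \mathbf{x}, Y\mathbf{e}\rangle$ is \emph{affine} in $Y$ for each fixed $\mathbf{x}\ge 0$. The definition of $\omega$ together with \eqref{eq:x:u:Y:w} says that $\mathbf{x}(Y) := \max(\mathbf{u}-Y\mathbf{e},\mathbf{0})$ is the (unique) minimizer of $\psi(\cdot,Y)$ over the nonnegative orthant, so $\omega(Y) = -\min_{\mathbf{x}\ge 0}\psi(\mathbf{x},Y) = \sup_{\mathbf{x}\ge 0}\bigl(-\psi(\mathbf{x},Y)\bigr)$. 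As a pointwise supremum of affine functions of $Y$, $\omega$ is convex.

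Next I would derive the gradient formula \eqref{eq:omega:gradient}. Strong convexity of $\psi(\cdot,Y)$ in $\mathbf{x}$ guarantees that the minimizer $\mathbf{x}(Y)$ is unique and depends continuously on $Y$, so Danskin's (envelope) theorem yields
\[
  \omega'(Y) = -\nabla_Y\,\psi(\mathbf{x},Y)\big|_{\mathbf{x}=\mathbf{x}(Y)}.
\]
Since $\partial \langle \mathbf{x}, Y\mathbf{e}\rangle / \partial Y_{ij} = x_i$, the matrix $\nabla_Y\psi(\mathbf{x},Y)$ equals the outer product $\mathbf{x}\mathbf{e}^{\mathrm T}$, and evaluating at $\mathbf{x}(Y)$ gives exactly $-\max(\mathbf{u}-Y\mathbf{e},\mathbf{0})\,\mathbf{e}^{\mathrm T}$. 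Continuity of the gradient then follows from continuity of $\mathbf{x}(Y)$.

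For the Lipschitz estimate \eqref{eq:omega:Lip}, the key tool is that the componentwise operation $\mathbf{z}\mapsto\max(\mathbf{z},\mathbf{0})$ is the Euclidean projection onto $\mathbb{R}^p_+$ and hence nonexpansive:
\[
  \|\max(\mathbf{u}-Y_1\mathbf{e},\mathbf{0}) - \max(\mathbf{u}-Y_2\mathbf{e},\mathbf{0})\|
  \;\le\; \|(Y_2-Y_1)\mathbf{e}\|.
\]
Then $\|\omega'(Y_1)-\omega'(Y_2)\|_F$ equals $\|(\mathbf{x}(Y_1)-\mathbf{x}(Y_2))\mathbf{e}^{\mathrm T}\|_F = \|\mathbf{x}(Y_1)-\mathbf{x}(Y_2)\|\cdot\|\mathbf{e}\|$, and I would finish by bounding $\|(Y_1-Y_2)\mathbf{e}\|$ and $\|\mathbf{e}\|$ in terms of $\|Y_1-Y_2\|_F$ and $g$ via Cauchy--Schwarz. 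This is a routine set of norm inequalities, but it is the only place where care is needed; the main conceptual obstacle is simply the justification of the Danskin step, which is why I would be explicit that strong convexity in $\mathbf{x}$ guarantees a unique, continuous minimizer.
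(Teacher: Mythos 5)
Your proposal is correct and follows essentially the same route as the paper: a Danskin-type envelope theorem for the gradient formula, a supremum-of-affine-functions argument for convexity, and nonexpansiveness of the projection onto the nonnegative orthant combined with $\|\mathbf e\|=\sqrt{g}$ for the Lipschitz bound. The one detail the paper handles that you gloss over is that the cited Danskin-type lemma requires a \emph{compact} feasible set, so the paper first restricts the inner minimization to the box $\{\mathbf x : \mathbf 0 \le \mathbf x \le \mathbf u + \lambda_2 \sum_i w_i \mathbf e\}$ and verifies that the closed-form minimizer $\max(\mathbf u - Y\mathbf e, \mathbf 0)$ always lies inside it; your appeal to strong convexity alone over the non-compact orthant would need this (easy) restriction to invoke the standard theorem.
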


To prove Theorem~\ref{theorem:omegaSmooth}, we first present two
technical lemmas. The first lemma is related to the optimal value
function~\cite{Bonnans:1998,Danskin:1967}, and it was used in a
recent study~\cite{Ying:2009:nips} on infinite kernel learning.
\begin{MinMaxOmega}\label{lemma:minmaxOmega}
\cite[Theorem~4.1]{Bonnans:1998} Let $X$ be a metric space and $U$ be
a normed space. Suppose that for all $\mathbf x \in X$, the function
$\psi(\mathbf x, \cdot)$ is differentiable and that $\psi(\mathbf x,
Y)$ and $D_{Y} \psi(\mathbf x, Y)$ (the partial derivative of
$\psi(\mathbf x, Y)$ with respect to $Y$) are continuous on $X \times
U$. Let $\Phi$ be a compact subset of $X$. Define the optimal value
function as $\varphi(Y)=\inf_{\mathbf x \in \Phi} \psi(\mathbf x,
Y)$. The optimal value function $\varphi(Y)$ is directionally
differentiable. In addition, if $\forall Y \in U$, $\psi(\cdot, Y)$
has a unique minimizer $\mathbf x (Y)$ over $\Phi$, then $\varphi(Y)$
is differentiable at $Y$ and the gradient of $\varphi(Y)$ is given by
$\varphi'(Y)=D_{Y} \psi(\mathbf x(Y), Y)$.

%Incorporating \eqref{eq:omega:Y}, we have \eqref{eq:omega:gradient}.
\end{MinMaxOmega}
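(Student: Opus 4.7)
The plan is to follow the classical Danskin/Bonnans-style argument. Let $M(Y) = \arg\min_{\mathbf x \in \Phi} \psi(\mathbf x, Y)$; by compactness of $\Phi$ and continuity of $\psi(\cdot, Y)$, this set is nonempty and compact for every $Y$. The strategy is to prove directional differentiability with the explicit formula
\begin{equation*}
\varphi'(Y; D) = \min_{\mathbf x \in M(Y)} D_Y \psi(\mathbf x, Y)[D],
\end{equation*}
and then specialize under the uniqueness hypothesis.

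For the upper bound, I would use feasibility: for any $\bar{\mathbf x} \in M(Y)$ we have $\varphi(Y + tD) \le \psi(\bar{\mathbf x}, Y + tD)$, so subtracting $\varphi(Y) = \psi(\bar{\mathbf x}, Y)$, dividing by $t>0$, and invoking differentiability of $\psi(\bar{\mathbf x}, \cdot)$ yields $\limsup_{t \to 0^+} t^{-1}(\varphi(Y+tD)-\varphi(Y)) \le D_Y \psi(\bar{\mathbf x}, Y)[D]$. Minimizing over $\bar{\mathbf x} \in M(Y)$ (the minimum is attained because $M(Y)$ is compact and the integrand is continuous) gives the upper bound. For the matching lower bound, I would pick $\mathbf x_t \in M(Y + tD)$, extract a convergent subsequence $\mathbf x_{t_k} \to \bar{\mathbf x} \in \Phi$ using compactness, and verify $\bar{\mathbf x} \in M(Y)$ by passing to the limit in $\psi(\mathbf x_{t_k}, Y + t_k D) \le \psi(\mathbf y, Y + t_k D)$ for arbitrary $\mathbf y \in \Phi$ and using joint continuity of $\psi$. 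Applying the mean value theorem along the segment $[Y, Y+tD]$ gives $\psi(\mathbf x_t, Y+tD) - \psi(\mathbf x_t, Y) = t\, D_Y \psi(\mathbf x_t, Y+\tau_t D)[D]$ for some $\tau_t \in (0,t)$; combined with $\varphi(Y+tD) - \varphi(Y) \ge \psi(\mathbf x_t, Y+tD) - \psi(\mathbf x_t, Y)$ and continuity of $D_Y \psi$ on $X \times U$, this delivers the lower bound.

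Under the uniqueness assumption $M(Y) = \{\mathbf x(Y)\}$, the formula collapses to $\varphi'(Y; D) = D_Y \psi(\mathbf x(Y), Y)[D]$, which is linear and continuous in $D$; this is Gâteaux differentiability with the advertised gradient. To upgrade to full (Fréchet) differentiability at $Y$, I would first show continuity of the minimizer map $Y \mapsto \mathbf x(Y)$: any cluster point of $\mathbf x(Y_n)$ for $Y_n \to Y$ must lie in $M(Y)$ by the subsequence argument above, and uniqueness forces the entire sequence to converge to $\mathbf x(Y)$. Then a standard estimate writing $\varphi(Y+H) - \varphi(Y) - D_Y \psi(\mathbf x(Y), Y)[H]$ as the difference of two one-sided comparisons (one using $\mathbf x(Y)$ as a test point, one using $\mathbf x(Y+H)$) and invoking continuity of both $\mathbf x(\cdot)$ and $D_Y \psi$ gives $o(\|H\|)$.

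The main obstacle is the lower bound in the directional derivative: one must track a moving minimizer $\mathbf x_t$ along the perturbation direction, and it is the compactness of $\Phi$ together with the joint continuity of $\psi$ that ensures limit points remain in $M(Y)$; without these, the argument fails. The upgrade from Gâteaux to Fréchet differentiability under uniqueness is then a more routine consequence of the continuity of the minimizer selection.
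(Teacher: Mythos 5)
Your proposal is correct, but it cannot be compared against a proof in the paper, because the paper does not prove this lemma at all: it is imported as a black box, cited as Theorem~4.1 of Bonnans and Shapiro (reference \cite{Bonnans:1998}), and then invoked in the proof of Theorem~\ref{theorem:omegaSmooth} to get differentiability of $\omega$. What you have written is a correct, self-contained reconstruction of the classical Danskin-type argument that underlies that citation. Your two-sided estimate is the standard route: the upper bound on $\limsup_{t\to 0^+} t^{-1}(\varphi(Y+tD)-\varphi(Y))$ from feasibility of a fixed minimizer, and the lower bound by tracking a moving minimizer $\mathbf x_t \in M(Y+tD)$, extracting a limit point via compactness of $\Phi$, identifying it as an element of $M(Y)$ via joint continuity of $\psi$, and controlling the increment with the mean value theorem and joint continuity of $D_Y\psi$. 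This yields $\varphi'(Y;D)=\min_{\mathbf x \in M(Y)} D_Y\psi(\mathbf x, Y)[D]$, hence directional differentiability, and your upgrade under the uniqueness hypothesis---continuity of the minimizer selection $Y \mapsto \mathbf x(Y)$ by the same compactness/uniqueness argument, followed by the two one-sided comparisons to get an $o(\|H\|)$ remainder---correctly delivers full differentiability with gradient $D_Y\psi(\mathbf x(Y),Y)$, assuming (as one should here) that differentiability of $\psi(\mathbf x,\cdot)$ is meant in the Fr\'echet sense and that continuity of $D_Y\psi$ is with respect to the dual norm. The trade-off between the two treatments: the paper's citation keeps the exposition short and leans on a more general perturbation-theory result, while your argument makes the lemma self-contained and exposes precisely where each hypothesis is used (compactness and joint continuity for the lower bound, uniqueness only for the final differentiability claim), which is exactly the part a reader cannot check from the paper alone.
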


The second lemma shows that the operator $\mathbf y= \max(\mathbf
x,\mathbf 0)$ is non-expansive.
\begin{convexLip}\label{lemma:convexLip}
$\forall \mathbf x, \mathbf y \in \mathbb{R}^p$, we have
$\|\max(\mathbf x,\mathbf 0)-\max(\mathbf y,\mathbf 0)\| \leq
\|\mathbf x-\mathbf y\|$.
\end{convexLip}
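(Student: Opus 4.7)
The plan is to reduce the claim to a one-dimensional statement and then lift it back to $\mathbb{R}^p$ componentwise, since the operator $\mathbf{x}\mapsto\max(\mathbf{x},\mathbf{0})$ acts independently on each coordinate. Concretely, I will first establish that for any two real numbers $a,b\in\mathbb{R}$,
\[
   |\max(a,0)-\max(b,0)| \leq |a-b|.
\]
This scalar inequality I would prove by a short case analysis on the signs of $a$ and $b$: when both are nonnegative the two sides are equal; when both are nonpositive the left side is zero; and in the mixed case where, say, $a\geq 0 \geq b$, the left side equals $a$ while the right side equals $a-b \geq a$. No calculus is needed, only sign bookkeeping.

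Once the scalar bound is in hand, I would square it coordinatewise and sum:
\[
   \|\max(\mathbf{x},\mathbf{0}) - \max(\mathbf{y},\mathbf{0})\|^2
   = \sum_{i=1}^p |\max(x_i,0)-\max(y_i,0)|^2
   \leq \sum_{i=1}^p |x_i - y_i|^2 = \|\mathbf{x}-\mathbf{y}\|^2,
\]
and take square roots to conclude. This is the whole argument.

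As an alternative (essentially equivalent) viewpoint, one can observe that $\max(\cdot,\mathbf{0})$ is exactly the Euclidean projection $P_{\mathbb{R}^p_+}$ onto the closed convex cone $\mathbb{R}^p_+$, and projections onto closed convex sets in Hilbert space are $1$-Lipschitz; the conclusion is then immediate from this general fact. I would include this remark only for perspective, since the direct componentwise proof above is shorter and self-contained. There is no real obstacle in this lemma: the only care needed is being systematic about the sign cases in the scalar inequality, which is the single substantive step.
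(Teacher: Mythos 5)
Your proof is correct and follows essentially the same route as the paper, which also reduces to the scalar inequality $|\max(x,0)-\max(y,0)|\leq|x-y|$ and lifts it componentwise; you simply spell out the case analysis and the summation step that the paper leaves implicit. The projection remark is a nice aside but not needed.
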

\begin{proof}
The results follows since $|\max(x, 0)-\max( y, 0)| \leq | x- y|$,
$\forall x, y \in \mathbb{R}$. \end{proof}

%We are now ready to prove Theorem~\ref{theorem:omegaSmooth}.

\textbf{Proof of Theorem~\ref{theorem:omegaSmooth}: }  To prove the
differentiability of $\omega(Y)$, we apply
Lemma~\ref{lemma:minmaxOmega} with $X=\mathbb{R}^p$,
$U=\mathbb{R}^{p \times g}$ and $\Phi=\{\mathbf x \in X: \mathbf u+
\lambda_2 \sum w_i \mathbf e  \geq \mathbf x \geq \mathbf 0\}$. It
is easy to verify that 1) $\psi(\mathbf x, \cdot)$ is
differentiable; 2) $\psi(\mathbf x, Y)$ and $D_{Y} \psi(\mathbf x,
Y)=\mathbf x \mathbf e^{\rm T}$  are continuous on $X \times U$; 3)
$\Phi$ be a compact subset of $X$; and 4) $\forall Y \in U$,
$\psi(\mathbf x, Y)$ has a unique minimizer $\mathbf x (Y)=\max(
\mathbf u - Y \mathbf e, \mathbf 0)$ over $\Phi$. Note that, the
last result follows from $\mathbf u >0$ and $\mathbf u - Y \mathbf e
\leq \mathbf u + \lambda_2 \sum w_i \mathbf e$, where the latter
inequality utilizes $\|Y^i\| \le \lambda_2 w_i$; and this indicates
that $\mathbf x (Y)=\max( \mathbf u - Y \mathbf e, \mathbf 0)=\arg
\min_{\mathbf x} \psi(\mathbf x, Y)=\arg \min_{\mathbf x \in \Phi}
\psi(\mathbf x, Y)$. It follows from Lemma~\ref{lemma:minmaxOmega}
that
$$\varphi(Y)=\inf_{\mathbf x \in
\Phi} \psi(\mathbf x, Y)=\psi(\max( \mathbf u - Y \mathbf e, \mathbf
0), Y)$$ is differentiable with $\varphi'(Y)=\max( \mathbf u - Y
\mathbf e, \mathbf 0) \mathbf e^{\rm T}$.

In \eqref{eq:min:max}, $\psi(\mathbf x, Y)$ is convex in $\mathbf x$
and concave in $Y$, and the constraint sets are closed convex for
both $\mathbf x$ and $Y$, thus the existence of the saddle point is
guaranteed by the well-known von Neumann
Lemma~\cite[Chapter~5.1]{NEMIROVSKI:1994:ID6}. As a result,
$$\varphi(Y)=\inf_{\mathbf x \in \Phi} \psi(\mathbf x, Y)=\psi(\max(
\mathbf u - Y \mathbf e, \mathbf 0), Y)$$ is concave, and
$\omega(Y)=-\varphi(Y)$ is convex. For any $Y_1, Y_2$, we have
\begin{equation}
\begin{aligned}
  \|\omega'(Y_1) - \omega'(Y_2) \|_F  = &\|\max( \mathbf u - Y_1 \mathbf e, \mathbf 0) \mathbf
  e^{\rm T}-\max( \mathbf u - Y_2 \mathbf e, \mathbf 0) \mathbf e^{\rm T}\|_F \\
   \leq & \|\mathbf e\| \times \|\max( \mathbf u - Y_1 \mathbf e, \mathbf 0)-\max( \mathbf u - Y_2 \mathbf e, \mathbf
  0)\| \\
   \leq & \|\mathbf e\| \times \| (Y_1- Y_2) \mathbf e\| \\
   \leq & g^2 \|Y_1- Y_2\|_F,
\end{aligned}
\end{equation}
where the second inequality follows from Lemma~\ref{lemma:convexLip}.
We prove \eqref {eq:omega:Lip}. \hfill $\Box$

%\subsubsection{Optimization Via Projected Gradient Descent}

From Theorem~\ref{theorem:omegaSmooth}, the problem
in~\eqref{eq:omega:Y} is a constrained smooth convex optimization
problem, and existing solvers for constrained smooth convex
optimization can be applied. In this paper, we employ the
accelerated gradient descent to solve \eqref{eq:omega:Y}, due to its
fast convergence property. Note that, the Euclidean projection onto
the set $\Omega$ can be computed in the closed form. We would like
to emphasize here that, the problem~\eqref{eq:omega:Y} may have a
much smaller size than~\eqref{eq:problem:m:y}.

\subsection{Computing the Duality Gap}

We show how to estimate the duality gap of the min-max problem
\eqref{eq:min:max}, which can be used to check the quality of the
solution and determine the convergence of the algorithm.

%In solving~\eqref{eq:omega:Y} via the iterative methods such as the
%projected gradient descent, it is desirable that we can terminate
%the algorithm based on the ``goodness" of the solution, rather than
%the number of iterations. For this sake, we make use of the duality
%gap of the min-max problem \eqref{eq:min:max}.

For any given approximate solution $\tilde Y \in \Omega$ for
$\omega(Y)$, we can construct the approximate solution $\mathbf
{\tilde x}=\max(\mathbf u - \tilde Y \mathbf e, \mathbf 0)$ for
$h_{\lambda_2}(\mathbf x)$. The duality gap for the min-max problem
\eqref{eq:min:max} at the point $(\mathbf {\tilde x},\tilde Y)$ can
be computed as:
\begin{equation}\label{eq:gap:Y}
    \mbox{gap}(\tilde Y)= \max_{Y \in \Omega} \psi(\tilde {\mathbf x},
    Y) -\min_{\mathbf x \in \mathbb{R}^p: \mathbf x \geq \mathbf 0} \psi(\mathbf x, \tilde
    Y).
\end{equation}
The main result of this subsection is summarized in the following theorem:
\begin{dualityGap}\label{theorem:duality:gap}
Let ${\rm gap}(\tilde Y)$ be the duality gap defined in
\eqref{eq:gap:Y}. Then, the following holds:
\begin{equation}\label{eq:gap:Y:result}
    {\rm gap}(\tilde Y)= \lambda_2 \sum_{i=1}^g
    ( w_i\|\tilde {\mathbf x}_{G_i}\|-\langle \tilde {\mathbf x}_{G_i}, \tilde {Y}_{G_i}^i
    \rangle).
\end{equation}
In addition, we have
\begin{equation}\label{eq:gap:1}
   \omega(\tilde Y)- \omega(Y^*) \leq {\rm gap}(\tilde Y),
\end{equation}
\begin{equation}\label{eq:gap:2}
   h(\tilde {\mathbf
    x})- h(\mathbf x^*) \leq  {\rm gap}(\tilde Y).
\end{equation}
\end{dualityGap}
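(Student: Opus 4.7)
The plan is to unfold the gap definition by computing its two component subproblems in closed form, then invoke strong duality to relate everything back to the primal and dual optima.

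First, I would evaluate $\max_{Y\in\Omega}\psi(\tilde{\mathbf x},Y)$. Since $\psi(\tilde{\mathbf x},Y)=\tfrac12\|\tilde{\mathbf x}-\mathbf u\|^2+\sum_i\langle \tilde{\mathbf x}_{G_i},Y^i_{G_i}\rangle$ (using that $Y^i$ is supported on $G_i$), and the columns of $Y$ are decoupled in $\Omega$, Cauchy–Schwarz shows that each inner product $\langle \tilde{\mathbf x}_{G_i},Y^i_{G_i}\rangle$ is maximized at the value $\lambda_2 w_i\|\tilde{\mathbf x}_{G_i}\|$, attained by $Y^i_{G_i}=\lambda_2 w_i\,\tilde{\mathbf x}_{G_i}/\|\tilde{\mathbf x}_{G_i}\|$. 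Hence $\max_{Y\in\Omega}\psi(\tilde{\mathbf x},Y)=h_{\lambda_2}(\tilde{\mathbf x})$. Next, I would observe that by construction $\tilde{\mathbf x}=\max(\mathbf u-\tilde Y\mathbf e,\mathbf 0)$ is exactly the minimizer delivered by \eqref{eq:x:u:Y:w}, so $\min_{\mathbf x\ge\mathbf 0}\psi(\mathbf x,\tilde Y)=\psi(\tilde{\mathbf x},\tilde Y)=-\omega(\tilde Y)$.

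Subtracting the two gives $\text{gap}(\tilde Y)=h_{\lambda_2}(\tilde{\mathbf x})-\psi(\tilde{\mathbf x},\tilde Y)=\lambda_2\sum_i w_i\|\tilde{\mathbf x}_{G_i}\|-\sum_i\langle \tilde{\mathbf x}_{G_i},\tilde Y^i_{G_i}\rangle$, which matches \eqref{eq:gap:Y:result} (the $\lambda_2$ being pulled outside the sum as written in the statement). To obtain \eqref{eq:gap:1} and \eqref{eq:gap:2}, I would use strong duality: the saddle-point property for $\psi$ on $\{\mathbf x\ge\mathbf 0\}\times\Omega$ was already established in the proof of Theorem~\ref{theorem:omegaSmooth} via the von Neumann lemma, hence
\begin{equation*}
 h_{\lambda_2}(\mathbf x^*)=\min_{\mathbf x\ge\mathbf 0}\max_{Y\in\Omega}\psi(\mathbf x,Y)=\max_{Y\in\Omega}\min_{\mathbf x\ge\mathbf 0}\psi(\mathbf x,Y)=-\omega(Y^*).
\end{equation*}
Combined with the trivial inequalities $h_{\lambda_2}(\mathbf x^*)\le h_{\lambda_2}(\tilde{\mathbf x})$ (optimality of $\mathbf x^*$) and $-\omega(Y^*)\ge -\omega(\tilde Y)$ (optimality of $Y^*$), one immediately reads off both bounds from $\text{gap}(\tilde Y)=h_{\lambda_2}(\tilde{\mathbf x})+\omega(\tilde Y)$: subtracting $h_{\lambda_2}(\mathbf x^*)=-\omega(Y^*)$ on the right gives \eqref{eq:gap:1} and \eqref{eq:gap:2}.

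The only genuinely non-mechanical ingredient is the use of strong duality, but that is inherited from the saddle-point argument already invoked earlier in the paper, so no new effort is required. The main thing to be careful about is bookkeeping with the support constraint $Y^i_{\overline G_i}=\mathbf 0$ in the closed-form maximization over $\Omega$, since it is what reduces $\langle \tilde{\mathbf x},Y^i\rangle$ to $\langle \tilde{\mathbf x}_{G_i},Y^i_{G_i}\rangle$ and allows the per-group Cauchy–Schwarz optimization.
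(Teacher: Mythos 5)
Your proof is correct and follows essentially the same route as the paper: evaluate $\max_{Y\in\Omega}\psi(\tilde{\mathbf x},Y)=h_{\lambda_2}(\tilde{\mathbf x})$ and $\min_{\mathbf x\geq\mathbf 0}\psi(\mathbf x,\tilde Y)=-\omega(\tilde Y)$ in closed form, then combine the saddle-point equality $h_{\lambda_2}(\mathbf x^*)=-\omega(Y^*)$ with the optimality of $\mathbf x^*$ and $Y^*$ to obtain \eqref{eq:gap:1} and \eqref{eq:gap:2}. Note only that your derivation actually yields $\sum_{i=1}^g\bigl(\lambda_2 w_i\|\tilde{\mathbf x}_{G_i}\|-\langle\tilde{\mathbf x}_{G_i},\tilde Y^i_{G_i}\rangle\bigr)$, i.e.\ the $\lambda_2$ should not multiply the inner-product term, which is the correct reading of \eqref{eq:gap:Y:result} given the constraint $\|Y^i\|\leq\lambda_2 w_i$ in $\Omega$.
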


\begin{proof}
Denote $(\mathbf x^*, Y^*)$ as the optimal solution to the min-max
problem \eqref{eq:min:max}. From
\eqref{eq:h:x:max}-\eqref{eq:omega:Y}, we have
\begin{equation}\label{eq:inequality:1}
    -\omega(\tilde Y)= \psi(\tilde {\mathbf x}, \tilde Y) =\min_{\mathbf x \in \mathbb{R}^p: \mathbf x \geq \mathbf 0} \psi(\mathbf x, \tilde Y) \leq
   \psi(\mathbf x^*, \tilde Y),
\end{equation}
\begin{equation}\label{eq:inequality:2}
    \psi(\mathbf x^*, \tilde Y) \leq \max_{Y \in \Omega} \psi(\mathbf x^*, Y) = \psi(\mathbf x^*, Y^*) = - \omega(Y^*),
\end{equation}
\begin{equation}\label{eq:inequality:3}
    h_{\lambda_2}(\mathbf x^*) = \psi(\mathbf x^*, Y^*) = \min_{\mathbf x \in \mathbb{R}^p: \mathbf x \geq \mathbf 0}  \psi(\mathbf x, Y^*) \leq \psi(\tilde {\mathbf x}, Y^*),
\end{equation}
\begin{equation}\label{eq:inequality:4}
    \psi(\tilde {\mathbf x}, Y^*) \leq \max_{Y \in \Omega} \psi(\tilde {\mathbf x}, Y) = h_{\lambda_2}(\tilde {\mathbf
    x}).
\end{equation}
Incorporating \eqref{eq:h:x:positive},
\eqref{eq:inequality:1}-\eqref{eq:inequality:4}, we prove
\eqref{eq:gap:Y:result}-\eqref{eq:gap:2}. \end{proof} In our
experiments, we terminate the algorithm when the estimated duality
gap is less than $10^{-10}$.

\section{Experiments}

We have conducted experiments to evaluate the efficiency of the
proposed algorithm using the breast cancer gene expression data
set~\cite{VV:2002}, which consists of 8,141 genes in 295 breast
cancer tumors (78 metastatic and 217 non-metastatic). For the sake of
analyzing microarrays in terms of biologically meaningful gene sets,
different approaches have been used to organize the genes into
(overlapping) gene sets. In our experiments, we
follow~\cite{Jacob:2009} and employ the following two approaches for
generating the overlapping gene sets (groups):
pathways~\cite{Subramanian:2005} and edges~\cite{Chuang:2007}. For
pathways, the canonical pathways from the Molecular Signatures
Database (MSigDB)~\cite{Subramanian:2005} are used. It contains 639
groups of genes, of which 637 groups involve the genes in our study.
The statistics of the 637 gene groups are summarized as follows: the
average number of genes in each group is 23.7, the largest gene group
has 213 genes, and 3,510 genes appear in these 637 groups with an
average appearance frequency of about 4. For edges, the network built
in~\cite{Chuang:2007} will be used, and we follow~\cite{Jacob:2009}
to extract 42,594 edges from the network, leading to 42,594
overlapping gene sets of size 2. All 8,141 genes appear in the 42,594
groups with an average appearance frequency of about 10.

\begin{figure}[!t]
\centering
  \includegraphics[width=1.8in]{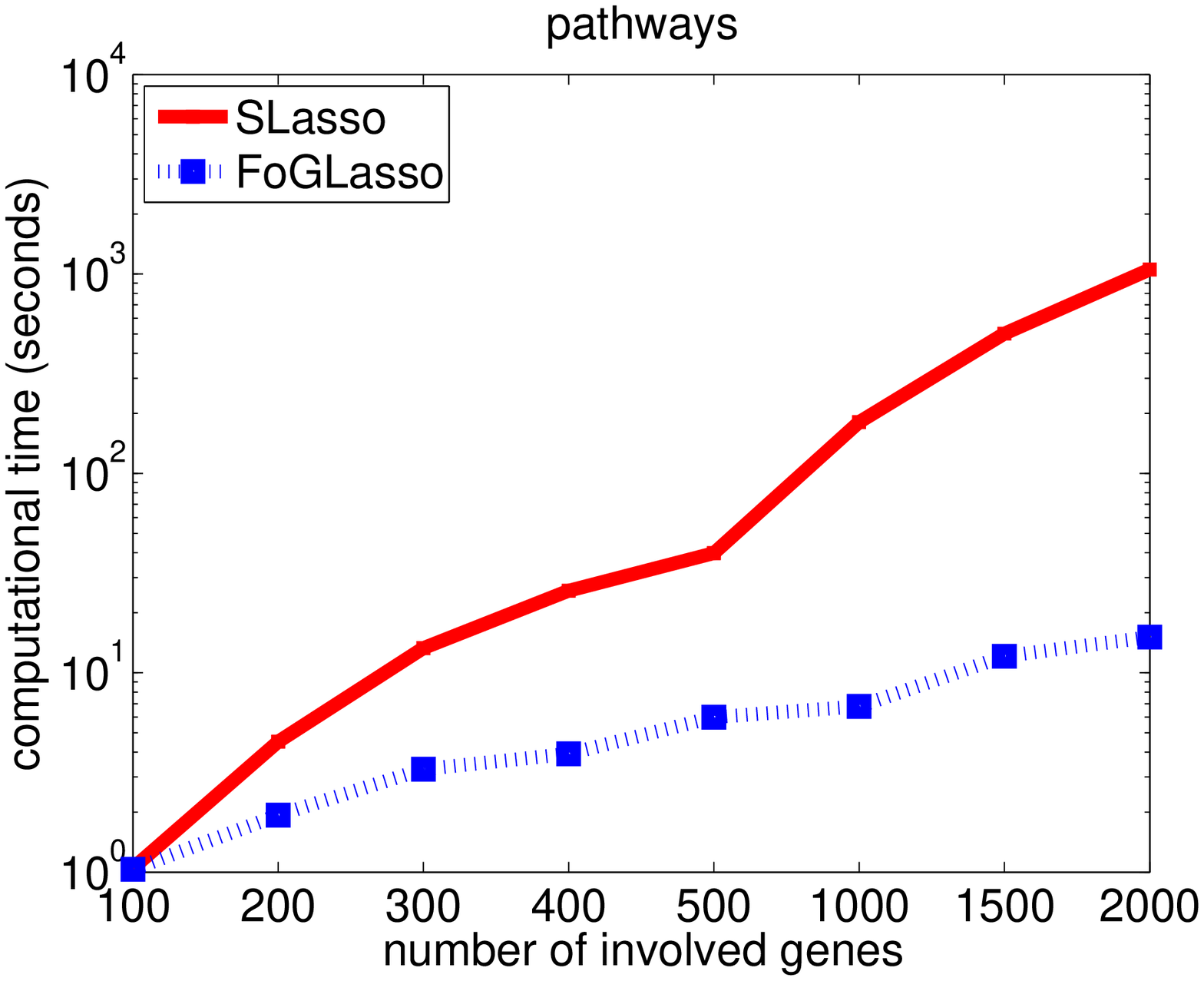}
  \includegraphics[width=1.8in]{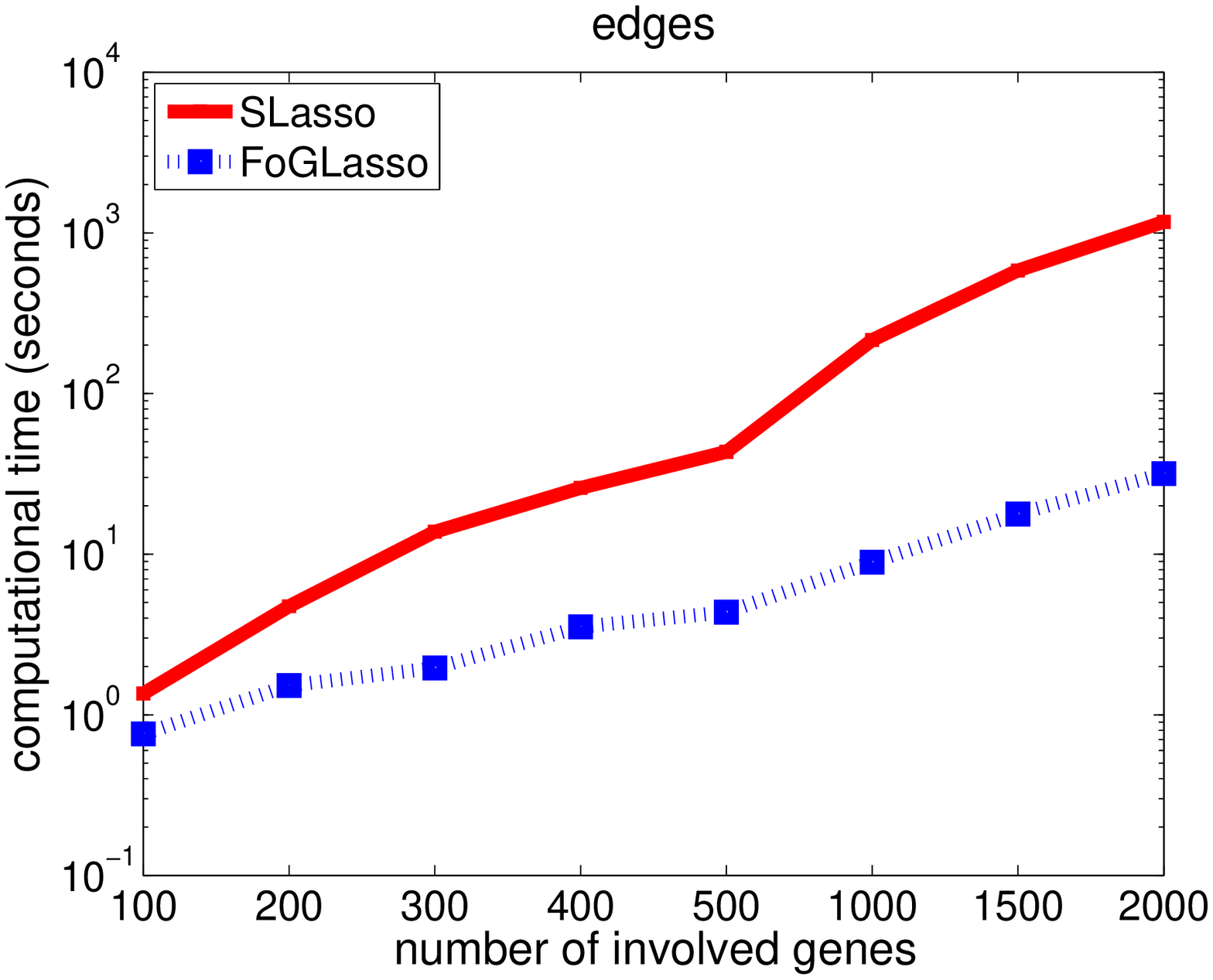} \\[-0.3cm]
\caption{
Comparison of SLasso~\cite{Jenatton:2009} and our proposed FoGLasso
algorithm in terms of computational time (in seconds) when different
numbers of genes (variables) are involved. The computation time is
reported in the logarithmic scale.} \label{fig:efficiency}
\end{figure}

\vspace{0.1in}

\noindent \textbf{Efficiency of the Proposed FoGLasso }  We compare
our proposed FoGLasso with the SLasso algorithm developed by Jenatton
et al.~\cite{Jenatton:2009} for solving~\eqref{eq:problem} with the
least squares loss $l(\mathbf x)=\frac{1}{2} \|A \mathbf x - \mathbf
b\|^2$. The experimental settings are as follows: we set
$w_i=\sqrt{|G_i|}$, and $\lambda_1=\lambda_2=\rho \times
\lambda_1^{\max}$, where $|G_i|$ denotes the size of the $i$-th group
$G_i$, $\lambda_1^{\max}=\|A^{\rm T} \mathbf b\|_{\infty}$ (the zero
point is a solution to~\eqref{eq:problem} if $\lambda_1
\geq\lambda_1^{\max}$), and $\rho$ is chosen from the set $\{5 \times
10^{-1}, 2 \times 10^{-1}, 1\times 10^{-1},5\times 10^{-2}, 2 \times
10^{-2}, 1 \times 10^{-2}, 5 \times 10^{-3}, 2 \times 10^{-3},
1\times 10^{-3}\}$. For a given $\rho$, we first run SLasso, and then
run our proposed FoGLasso until it achieves an objective function
value smaller than or equal to that of SLasso. For both SLasso and
FoGLasso, we apply the ``warm'' start technique, i.e., using the
solution corresponding to the larger regularization parameter as the
``warm'' start for the smaller one. We vary the number of genes
involved, and report the total computational time (seconds) including
all nine regularization parameters in Figure~\ref{fig:efficiency} and
Table~\ref{table:efficiency}. We can observe that, 1) our proposed
FoGLasso is much more efficient than SLasso; 2) the advantage of
FoGLasso over SLasso in efficiency grows with the increasing number
of genes (variables). For example, with the grouping by pathways,
FoGLasso is about 25 and 70 times faster than SLasso for 1000 and
2000 genes (variables), respectively; and 3) the efficiency on edges is inferior
to that on pathways, due to the larger number of overlapping groups.
%We attribute the superior efficiency of the proposed FoGLasso to the
%fast convergence rate of the accelerated gradient descent and the
%efficient computation of the proximal operator.
These results verify the efficiency of the proposed FoGLasso
algorithm based on the efficient procedure for computing the proximal
operator presented in Section~\ref{s:proximal:operator}.

%  For the proximal operator discussed in
% Section~\ref{s:proximal:operator}, we terminate the algorithm for
% solving the dual formulation when the duality gap in Section 3.2.1 is
% less than $10^{-10}$.

\begin{table}
\caption{Scalability study of the proposed FoGLasso algorithm under
different numbers ($p$) of genes involved. The reported results are
the total computational time (seconds) including all nine
regularization parameter values.} \label{table:efficiency} \centering
   \begin{tabular}{lllllll}
    \hline
    $p$      & 3000 & 4000 & 5000 & 6000 & 7000 & 8141 \\
    pathways &    37.6 &   48.3 &   62.5 &   68.7 &   86.2 &   99.7 \\
    edges    &    58.8 &   84.8 &  102.7 &  140.8 &  173.3 &  247.8 \\
    \hline
   \end{tabular}
\end{table}

\vspace{0.1in}

\noindent \textbf{Computation of the Proximal Operator }  In this
experiment, we run FoGLasso on the breast cancer data set using all
8,141 genes. We terminate FoGLasso if the change of the objective
function value is less than $10^{-5}$. We use the 42,594 edges to
generate the overlapping groups. We obtain similar results for the
637 groups based on pathways. We set $\rho=0.01$. The results are
shown in Figure~\ref{fig:projection}. The left plot shows that the
objective function value decreases rapidly in the proposed FoGLasso.
In the middle plot, we report the percentage of the identified zero
groups by applying Lemma~\ref{corollary:InducedSparsity}. Our
experimental result shows that, 1) after 16 iterations, 50\% of the
zero groups are correctly identified; and 2) after 50 iterations,
80\% of the zero groups are identified. Therefore, with
Lemma~\ref{corollary:InducedSparsity}, we can significantly reduce
the problem size of the subsequent dual reformulation (see
Section~\ref{ss:proximal:operator}). In the right plot of
Figure~\ref{fig:projection}, we present the number of inner
iterations for solving the proximal operator via the dual
reformulation. We attribute the decreasing number of inner iterations
to 1) the size of the reduced problem is decreasing when many zero
groups are identified (see the middle plot); and 2) in solving the
dual reformulation, we can apply the $Y$ computed in the previous
iteration as the ``warm'' start for computing the proximal operator
in the next iteration.

\begin{figure}
\centering
  \includegraphics[width=1.4in]{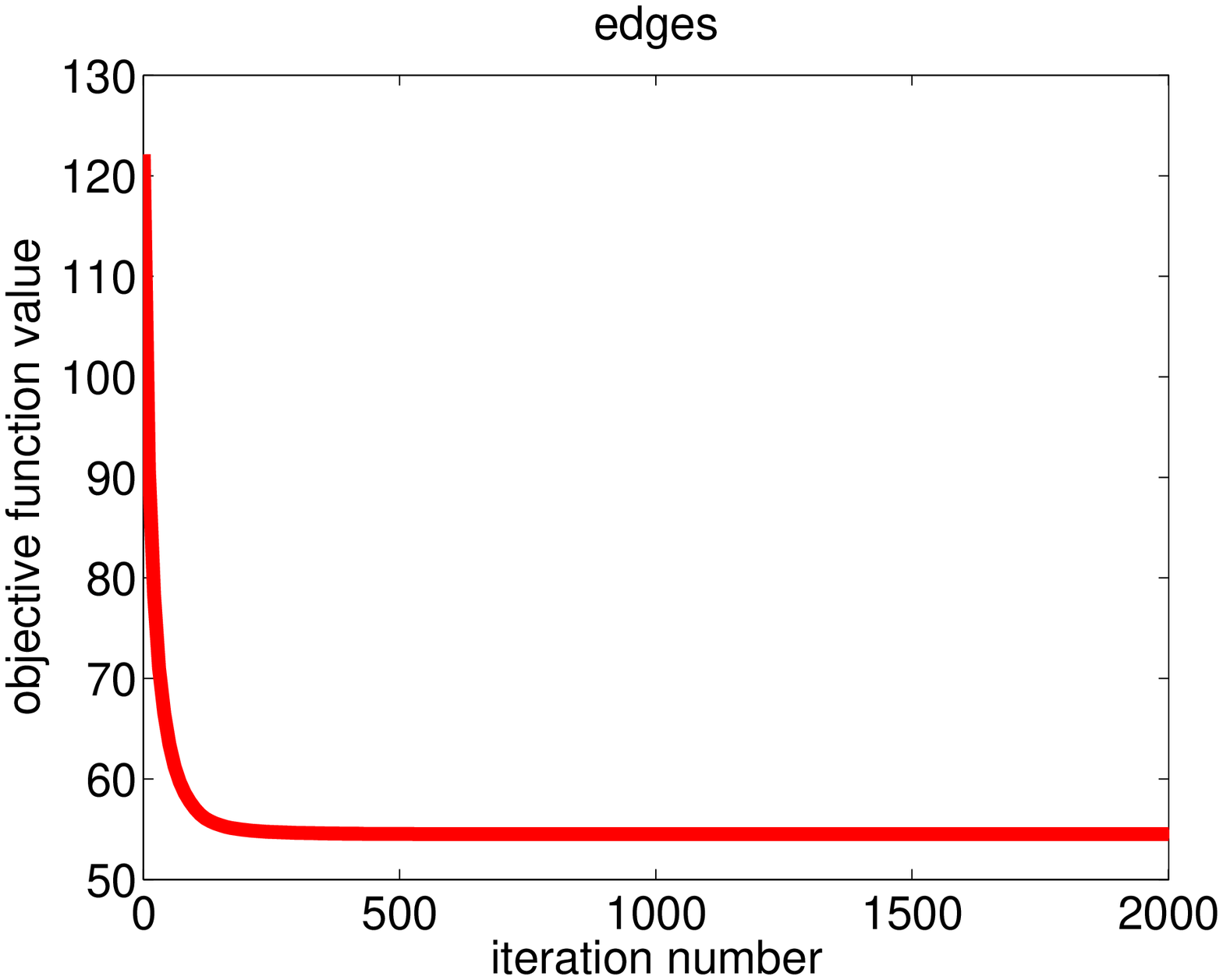}
  \includegraphics[width=1.4in]{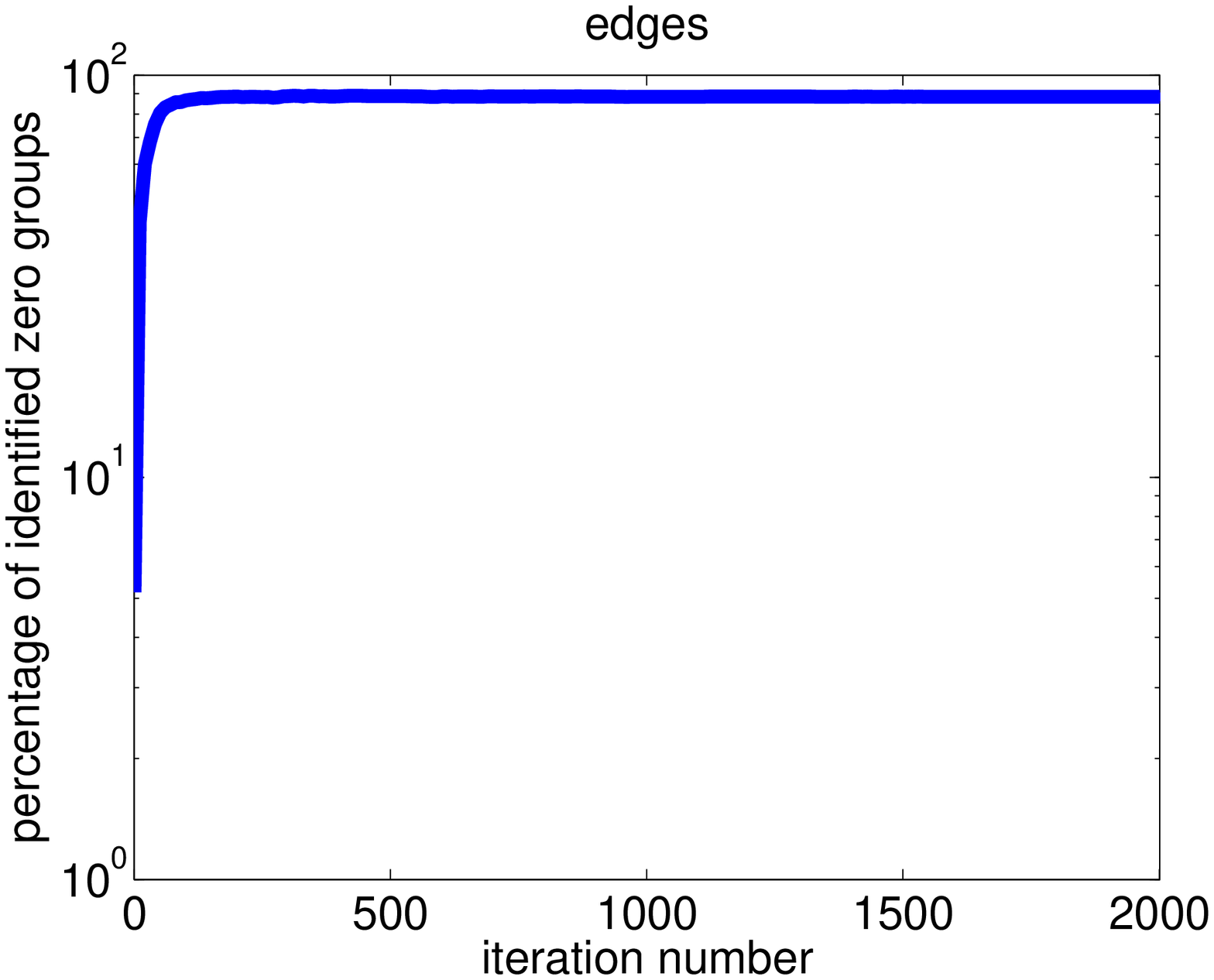}
  \includegraphics[width=1.4in]{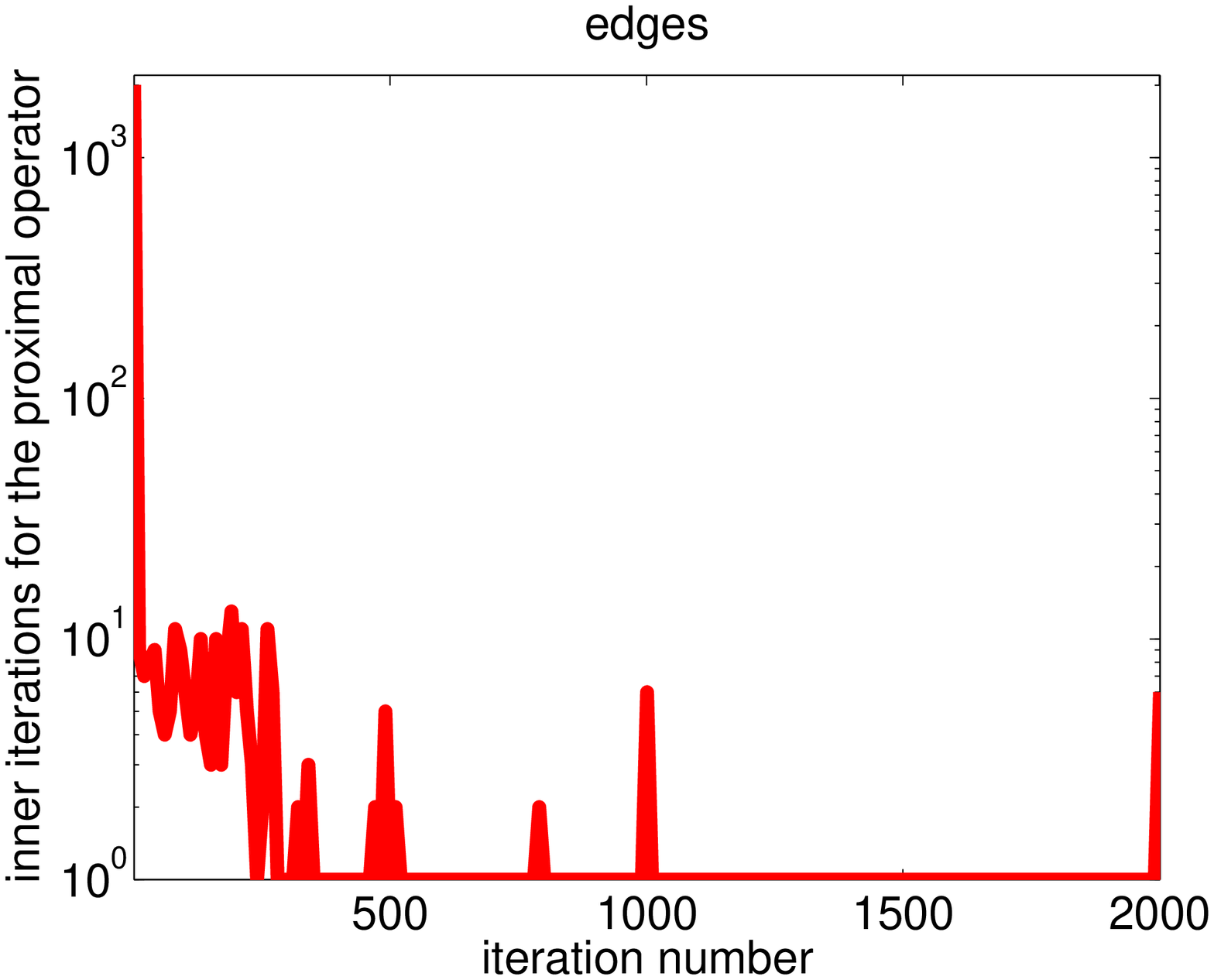} \\[-0.25cm]
\caption{
Performance of the computation of the proximal operator in FoGLasso.
The left plot shows the objective function value during the FoGLasso
iteration. The middle plot shows the percentage of the identified
zero groups by applying Lemma~\ref{corollary:InducedSparsity}.
The right plot shows the number of inner iterations for achieving the
duality gap less than $10^{-10}$ when one solves the proximal operator
via the dual reformulation (see Section 3.2).} \label{fig:projection}
\end{figure}

\vspace{0.1in}

\noindent \textbf{Classification Performance }  We compare the
classification performance of the overlapping group Lasso with Lasso.
We use 60\% samples for training and the rest 40\% for testing. To
deal with the imbalance of the positive and negative samples, we make
use of the balanced error rate~\cite{Guyon:2004}, which is defined as
the average error of two classes. We report the results averaged over
10 runs in Figure~\ref{fig:performance}. Our results show that: 1)
with the overlapping pathways, overlapping Lasso and Lasso achieve
comparable classification performance; 2) with the overlapping edges,
overlapping Lasso outperforms Lasso; and 3) the performance based on
edges is better than that based on the pathways in our experiment.

\begin{figure}
\centering
  \includegraphics[width=1.8in]{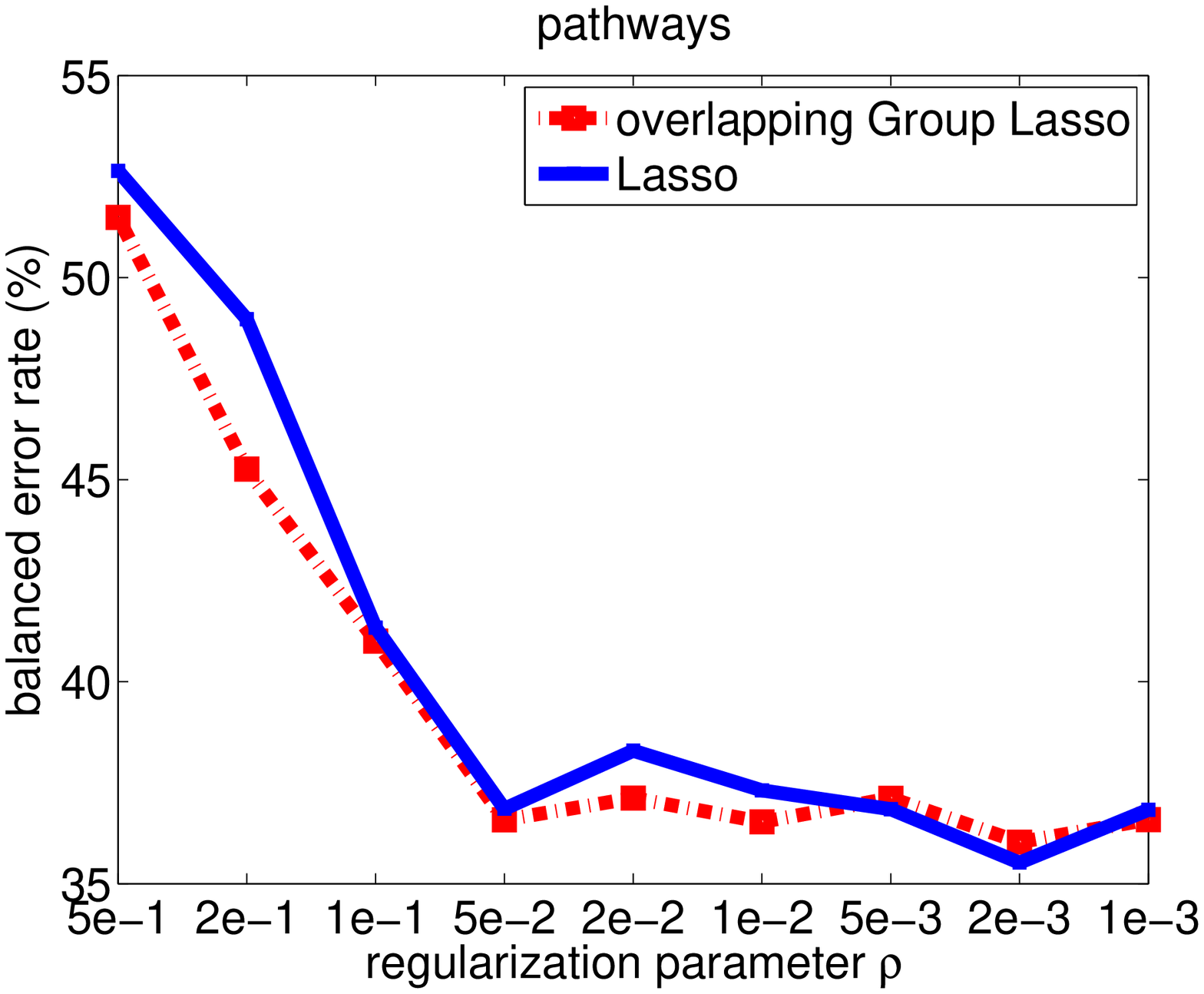}
  \includegraphics[width=1.8in]{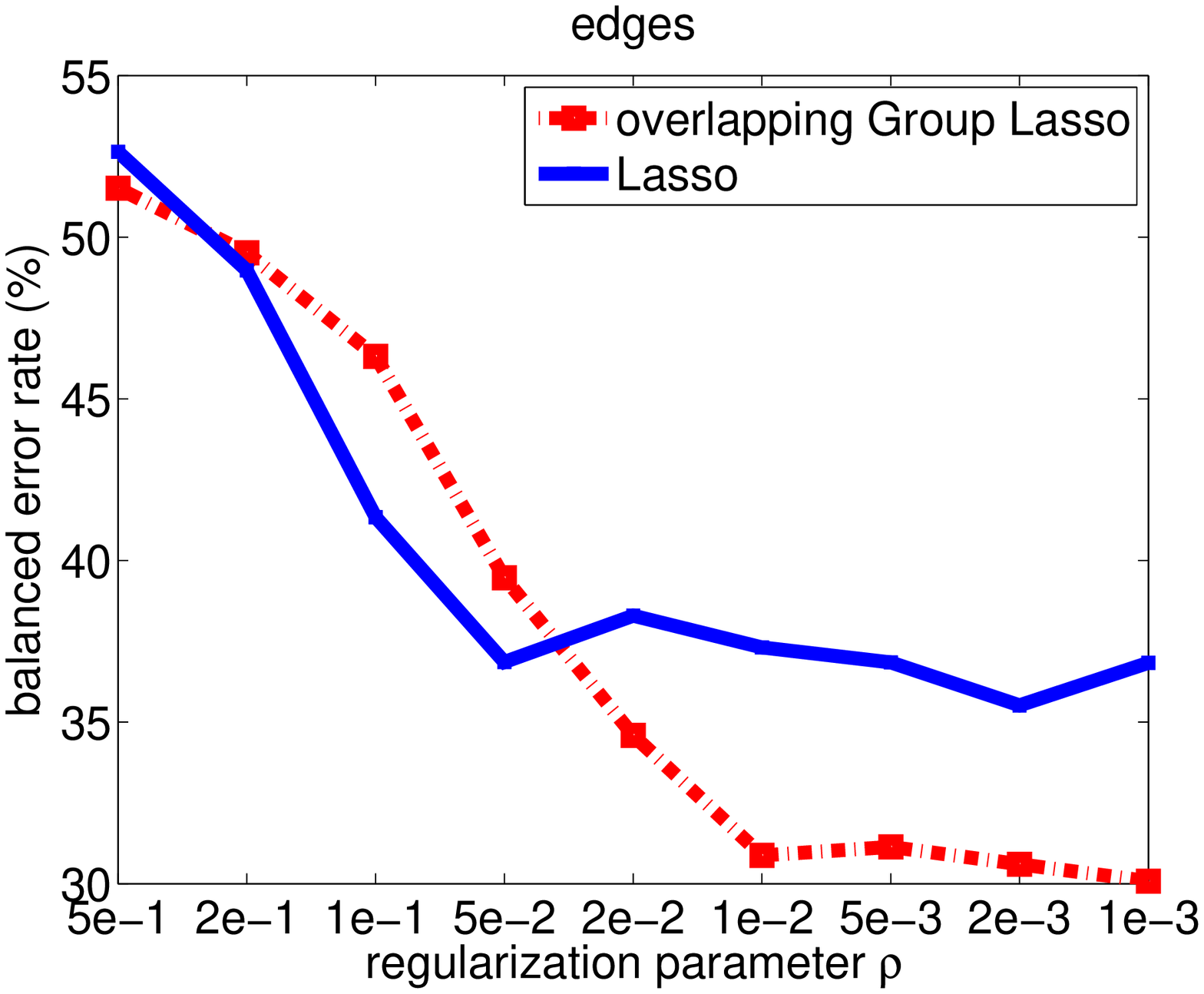} \\[-0.25cm]
\caption{
Comparison of overlapping group Lasso and Lasso in terms of the
balanced error rate. The left plot shows the classification performance with
overlapping pathways; and the right plot shows the result with the
overlapping edges.} \label{fig:performance}
\end{figure}

\section{Conclusion}

In this paper, we consider the efficient optimization of the
overlapping group Lasso penalized problem based on the accelerated
gradient descent method. We reveal several key properties of the
proximal operator associated with the overlapping group Lasso, and
compute the proximal operator via solving the smooth and convex dual
problem. Numerical experiments on the breast cancer data set
demonstrate the efficiency of the proposed algorithm. Our
experimental results also show the benefit of the overlapping group
Lasso in comparison with Lasso. In the future, we plan to apply the
proposed algorithm to other real-world applications involving
overlapping groups.

{\small
\bibliographystyle{plain}
\bibliography{agmeep,cgLasso,fusedLasso,manual,mtl}

\begin{thebibliography}{10}

\bibitem{Beck:2009}
A.~Beck and M.~Teboulle.
\newblock A fast iterative shrinkage-thresholding algorithm for linear inverse
  problems.
\newblock {\em SIAM Journal on Imaging Sciences}, 2(1):183--202, 2009.

\bibitem{Bondell:2008}
H.~D. Bondell and B.~J. Reich.
\newblock Simultaneous regression shrinkage, variable selection and clustering
  of predictors with oscar.
\newblock {\em Biometrics}, 64:115--123, 2008.

\bibitem{Bonnans:1998}
J.~F. Bonnans and A.~Shapiro.
\newblock Optimization problems with perturbations: A guided tour.
\newblock {\em SIAM Review}, 40(2):228--264, 1998.

\bibitem{Chuang:2007}
H.~Y. Chuang, E.~Lee, Y.~T. Liu, D.~Lee, and T.~Ideker.
\newblock Network-based classification of breast cancer metastasis.
\newblock {\em Molecular Systems Biology}, 3(140), 2007.

\bibitem{Danskin:1967}
J.~M. Danskin.
\newblock {\em The theory of max-min and its applications to weapons allocation
  problems}.
\newblock Springer-Verlag, New York, 1967.

\bibitem{Guyon:2004}
I.~Guyon, A.~B. Hur, S.~Gunn, and G.~Dror.
\newblock Result analysis of the nips 2003 feature selection challenge.
\newblock In {\em Neural Information Processing Systems}, pages 545--552, 2004.

\bibitem{Hiriart-Urruty:1993}
J.~Hiriart-Urruty and C.~Lemar\'{e}chal.
\newblock {\em Convex Analysis and Minimization Algorithms I \& II}.
\newblock Springer Verlag, Berlin, 1993.

\bibitem{Jacob:2009}
L.~Jacob, G.~Obozinski, and J.~Vert.
\newblock Group lasso with overlap and graph lasso.
\newblock In {\em International Conference on Machine Learning}, pages
  433--440, 2009.

\bibitem{Jenatton:2009}
R.~Jenatton, J.-Y. Audibert, and F.~Bach.
\newblock Structured variable selection with sparsity-inducing norms.
\newblock Technical report, arXiv:0904.3523, 2009.

\bibitem{Jenatton:2010:icml}
R.~Jenatton, J.~Mairal, G.~Obozinski, and F.~Bach.
\newblock Proximal methods for sparse hierarchical dictionary learning.
\newblock In {\em International Conference on Machine Learning}, 2010.

\bibitem{Kim:20010:icml}
S.~Kim and E.~P. Xing.
\newblock Tree-guided group lasso for multi-task regression with structured
  sparsity.
\newblock In {\em International Conference on Machine Learning}, 2010.

\bibitem{Liu:han:2009:blockwise:cd}
H.~Liu, M.~Palatucci, and J.~Zhang.
\newblock Blockwise coordinate descent procedures for the multi-task lasso,
  with applications to neural semantic basis discovery.
\newblock In {\em International Conference on Machine Learning}, 2009.

\bibitem{Liu:2009:UAI}
J.~Liu, S.~Ji, and J.~Ye.
\newblock Multi-task feature learning via efficient $\ell_{2,1}$-norm
  minimization.
\newblock In {\em The 25th Conference on Uncertainty in Artificial
  Intelligence}, 2009.

\bibitem{Liu:nips:2010}
J.~Liu and J.~Ye.
\newblock Moreau-yosida regularization for grouped tree structure learning.
\newblock In {\em Advances in Neural Information Processing Systems}, 2010.

\bibitem{Liu:kdd:2010}
J.~Liu, L.~Yuan, and J.~Ye.
\newblock An efficient algorithm for a class of fused lasso problems.
\newblock In {\em ACM SIGKDD Conference on Knowledge Discovery and Data
  Mining}, 2010.

\bibitem{Meier:2008}
L.~Meier, S.~Geer, and P.~B\"uhlmann.
\newblock The group lasso for logistic regression.
\newblock {\em Journal of the Royal Statistical Society: Series B}, 70:53--71,
  2008.

\bibitem{Moreau:1965}
J.-J. Moreau.
\newblock Proximit\'{e} et dualit\'{e} dans un espace hilbertien.
\newblock {\em Bull. Soc. Math. France}, 93:273--299, 1965.

\bibitem{NEMIROVSKI:1994:ID6}
A.~Nemirovski.
\newblock {\em Efficient methods in convex programming}.
\newblock Lecture Notes, 1994.

\bibitem{Nesterov:2004}
Y.~Nesterov.
\newblock {\em Introductory Lectures on Convex Optimization: A Basic Course}.
\newblock Kluwer Academic Publishers, 2004.

\bibitem{Shalev-Shwartz:2007:icml}
S.~Shalev-Shwartz, Y.~Singer, and N.~Srebro.
\newblock Pegasos: Primal estimated sub-gradient solver for svm.
\newblock In {\em International Conference on Machine Learning}, 2007.

\bibitem{Subramanian:2005}
A.~Subramanian and et~al.
\newblock Gene set enrichment analysis: A knowledge-based approach for
  interpreting genome-wide expression profiles.
\newblock {\em Proceedings of the National Academy of Sciences},
  102(43):15545--15550, 2005.

\bibitem{TIBSHIRANI:1996:ID24}
R.~Tibshirani.
\newblock Regression shrinkage and selection via the lasso.
\newblock {\em Journal of the Royal Statistical Society Series B},
  58(1):267--288, 1996.

\bibitem{VV:2002}
M.~J. Van~de Vijver and et~al.
\newblock A gene-expression signature as a predictor of survival in breast
  cancer.
\newblock {\em The New England Journal of Medicine}, 347(25):1999--2009, 2002.

\bibitem{Xiao:average:2009}
L.~Xiao.
\newblock Dual averaging methods for regularized stochastic learning and online
  optimization.
\newblock In {\em Advances in Neural Information Processing Systems}, 2009.

\bibitem{Ying:2009:nips}
Y.~Ying, C.~Campbell, and M.~Girolami.
\newblock Analysis of svm with indefinite kernels.
\newblock In {\em Advances in Neural Information Processing Systems 22}, pages
  2205--2213. 2009.

\bibitem{Yosida:1964}
K.~Yosida.
\newblock {\em Functional Analysis}.
\newblock Springer Verlag, Berlin, 1964.

\bibitem{Yuan:2006}
M.~Yuan and Y.~Lin.
\newblock Model selection and estimation in regression with grouped variables.
\newblock {\em Journal Of The Royal Statistical Society Series B},
  68(1):49--67, 2006.

\bibitem{Zhao:2009}
P.~Zhao, G.~Rocha, and B.~Yu.
\newblock The composite absolute penalties family for grouped and hierarchical
  variable selection.
\newblock {\em Annals of Statistics}, 37(6A):3468--3497, 2009.

\end{thebibliography}
}

\end{document}